\documentclass[preprint,12pt,amsmath,amssymb,superscriptaddress,floatfix]{revtex4-2}

\usepackage[T1]{fontenc}
\usepackage{amsmath, amssymb, amsfonts, amsthm}
\usepackage{graphicx}
\graphicspath{{./}}
\usepackage{float}
\usepackage{hyperref}
\usepackage{cleveref}
\usepackage{booktabs}
\usepackage{multirow}
\usepackage{listings}
\usepackage{xcolor}
\usepackage{bm}
\usepackage{braket}
\usepackage{tikz}

\definecolor{codegreen}{rgb}{0,0.6,0}
\definecolor{codegray}{rgb}{0.5,0.5,0.5}
\definecolor{codepurple}{rgb}{0.58,0,0.82}
\definecolor{backcolour}{rgb}{0.97,0.97,0.97}

\lstdefinestyle{pythonstyle}{
    backgroundcolor=\color{backcolour},
    commentstyle=\color{codegreen},
    keywordstyle=\color{blue},
    numberstyle=\tiny\color{codegray},
    stringstyle=\color{codepurple},
    basicstyle=\ttfamily\small,
    breakatwhitespace=false,
    breaklines=true,
    captionpos=b,
    keepspaces=true,
    numbers=left,
    numbersep=5pt,
    showspaces=false,
    showstringspaces=false,
    showtabs=false,
    tabsize=2,
    language=Python
}
\definecolor{pfblue}{RGB}{0,82,155}
\definecolor{pforange}{RGB}{200,80,0}
\hypersetup{
    colorlinks=true,
    linkcolor=pfblue,
    citecolor=pforange,
    urlcolor=pfblue
}

\newcommand{\complex}{\mathbb{C}}
\newcommand{\unitary}{\mathrm{U}}

\newtheorem{definition}{Definition}[section]
\newtheorem{theorem}{Theorem}[section]
\newtheorem{proposition}{Proposition}[section]
\newtheorem{corollary}[theorem]{Corollary}
\newtheorem{remark}{Remark}[section]

\begin{document}


\title{PhasorFlow: A Python Library for Unit Circle Based Computing}

\author{Dibakar Sigdel}
\email{devdeep137@gmail.com}
\affiliation{Mindverse Computing LLC, Lynnwood, WA 98087}
\author{Namuna Panday}
\affiliation{Mindverse Computing LLC, Lynnwood, WA 98087}

\date{\today}

\begin{abstract}
We present PhasorFlow, an open-source Python library for computing on the $S^1$ unit circle.
Inputs are encoded as complex phasors $z=e^{i\phi}$ on the $N$-torus ($\mathbb{T}^N$); 
as computation proceeds through unitary wave-interference gates, global norm is preserved 
while components drift into $\mathbb{C}^N$, letting algorithms leverage continuous geometric 
gradients. PhasorFlow makes three contributions. First, we formalize the Phasor Circuit model
 ($N$ threads, $M$ gates) with a 22-gate library spanning standard-unitary, non-linear, 
neuromorphic, and encoding operations under full matrix-algebra simulation. 
Second, we introduce the Variational Phasor Circuit (VPC), a trainable phase-native classifier
analogous to variational quantum circuits. Third, we introduce the Phasor Transformer block 
and Large Phasor Model (LPM), replacing $QK^TV$ attention with a parameter-free DFT token-mixing layer.
We validate the framework on financial volatility detection, neuromorphic associative memory, 
neural binding, period finding, and algorithmic logic applications that are unique to 
the library. This positions unit-circle computing as a deterministic, lightweight paradigm on 
classical hardware. Available at \url{https://github.com/mindverse-computing/phasorflow}.
\end{abstract}

\keywords{unit circle computing, phasor circuits, variational circuits, Fourier transform, transformer, brain--computer interface, oscillatory computing}

\maketitle



\section{Introduction}
\label{sec:introduction}

The landscape of computation has historically been defined by the mathematical structures upon which data is represented and transformed.
Classical digital computers operate on discrete bits---points on a zero-dimensional manifold $\{0, 1\}$.
Quantum computers leverage qubits, which reside in the complex projective space $\mathbb{CP}^1$, enabling true quantum superposition and non-local entanglement through unitary operations on exponentially scaling Hilbert spaces \cite{nielsen2010quantum}.
Between these extremes lies a rich hierarchy of computational manifolds that remain largely unexplored in the context of programmable classical computing frameworks.

We refer to this hierarchy as the \emph{Geometric Ladder of Computation}:
\begin{enumerate}
    \item \textbf{Bits}: Points on a discrete set $\{0,1\}$---zero-dimensional.
    \item \textbf{Phasors}: Points on the unit circle $S^1 \cong \unitary(1)$---one-dimensional continuous manifold.
    \item \textbf{Qubits}: Points in complex projective space $\mathbb{CP}^n$---parameterized by both amplitude and phase.
\end{enumerate}

We visualize these three primary computational paradigms in \Cref{fig:three_paradigms}.

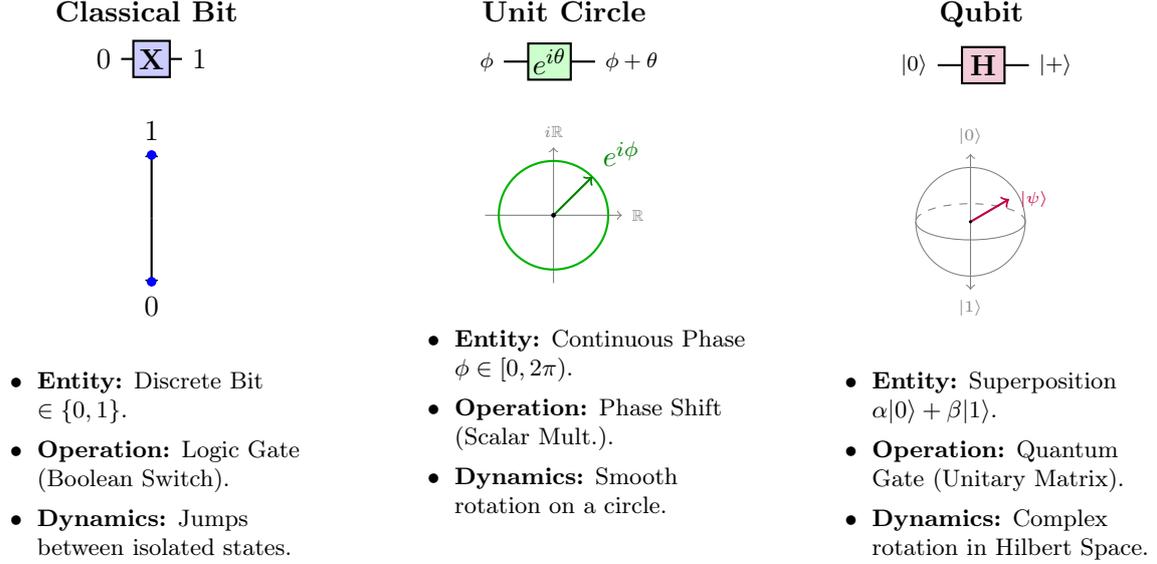
\begin{figure}[ht]
    \centering
    \begin{minipage}[t]{0.31\textwidth}
        \centering
        \textbf{Classical Bit}
        \vspace{0.2cm}
        
        \begin{tikzpicture}[scale=0.8]
            \draw[thick] (0,0) -- (1,0);
            \draw[thick, fill=blue!20] (0.2,-0.3) rectangle (0.8,0.3);
            \node at (0.5,0) {\textbf{X}};
            \node[left] at (0,0) {0};
            \node[right] at (1,0) {1};
        \end{tikzpicture}
        
        \vspace{0.4cm}
        
        \begin{tikzpicture}[scale=0.6]
            \draw[thick, ->] (0,0) -- (0,1.5) node[above] {1};
            \draw[thick, ->] (0,0) -- (0,-1.5) node[below] {0};
            \fill[blue] (0,1.4) circle (3pt);
            \fill[blue] (0,-1.4) circle (3pt);
        \end{tikzpicture}
        
        \vspace{0.2cm}
        \footnotesize
        \begin{itemize}
            \raggedright
            \item \textbf{Entity:} Discrete Bit $\in \{0, 1\}$.
            \item \textbf{Operation:} Logic Gate (Boolean Switch).
            \item \textbf{Dynamics:} Jumps between isolated states.
        \end{itemize}
    \end{minipage}\hfill
    \begin{minipage}[t]{0.31\textwidth}
        \centering
        \textbf{Unit Circle}
        \vspace{0.2cm}
        
        \begin{tikzpicture}[scale=0.8]
            \draw[thick] (0,0) -- (1.5,0);
            \draw[thick, fill=green!20] (0.4,-0.3) rectangle (1.1,0.3);
            \node at (0.75,0) {$e^{i\theta}$};
            \node[left] at (0,0) {\scriptsize $\phi$};
            \node[right] at (1.5,0) {\scriptsize $\phi+\theta$};
        \end{tikzpicture}
        
        \vspace{0.4cm}
        
        \begin{tikzpicture}[scale=0.6]
            \draw[->, gray] (-1.5,0) -- (1.5,0) node[right, font=\tiny] {$\mathbb{R}$};
            \draw[->, gray] (0,-1.5) -- (0,1.5) node[above, font=\tiny] {$i\mathbb{R}$};
            \draw[thick, green!70!black] (0,0) circle (1.2);
            \draw[thick, ->, green!50!black] (0,0) -- (0.848,0.848) node[above right] {$e^{i\phi}$};
            \fill (0,0) circle (1.5pt);
        \end{tikzpicture}

        \vspace{0.2cm}
        \footnotesize
        \begin{itemize}
            \raggedright
            \item \textbf{Entity:} Continuous Phase $\phi \in (-\pi, \pi]$.
            \item \textbf{Operation:} Phase Shift (Scalar Mult.).
            \item \textbf{Dynamics:} Smooth rotation on a circle.
        \end{itemize}
    \end{minipage}\hfill
    \begin{minipage}[t]{0.31\textwidth}
        \centering
        \textbf{Qubit}
        \vspace{0.2cm}
        
        \begin{tikzpicture}[scale=0.8]
            \draw[thick] (0,0) -- (1.5,0);
            \draw[thick, fill=purple!20] (0.4,-0.3) rectangle (1.1,0.3);
            \node at (0.75,0) {\textbf{H}};
            \node[left] at (0,0) {\scriptsize $|0\rangle$};
            \node[right] at (1.5,0) {\scriptsize $|+\rangle$};
        \end{tikzpicture}
        
        \vspace{0.4cm}

        \begin{tikzpicture}[scale=0.6]
            \draw[dashed, gray] (1.2,0) arc (0:180:1.2 and 0.4);
            \draw[gray] (-1.2,0) arc (180:360:1.2 and 0.4);
            \draw[gray] (0,0) circle (1.2);
            \draw[->, gray] (0,0) -- (0,1.5) node[above, font=\tiny] {$|0\rangle$};
            \draw[->, gray] (0,0) -- (0,-1.5) node[below, font=\tiny] {$|1\rangle$};
            \draw[thick, ->, purple] (0,0) -- (0.848,0.5) node[right, font=\tiny] {$|\psi\rangle$};
            \fill (0,0) circle (1pt);
        \end{tikzpicture}

        \vspace{0.2cm}
        \footnotesize
        \begin{itemize}
            \raggedright
            \item \textbf{Entity:} Superposition $\alpha|0\rangle + \beta|1\rangle$.
            \item \textbf{Operation:} Quantum Gate (Unitary Matrix).
            \item \textbf{Dynamics:} Complex rotation in Hilbert Space.
        \end{itemize}
    \end{minipage}
    \caption{The three paradigms of computation. PhasorFlow introduces the Unit Circle paradigm as a continuous, deterministic bridge between discrete classical bits and complex, non-deterministic quantum qubits.}
    \label{fig:three_paradigms}
\end{figure}

The unit circle $S^1$, corresponding to the unitary group $\unitary(1)$, occupies the middle ground of this ladder.
It is the simplest continuous group that supports interference, periodicity, and Fourier analysis---three properties that are foundational to both signal processing and quantum mechanics \cite{hirose2012complex, tygert2016importance}.
A computational element on $S^1$, which we term a \emph{phasor}, is a complex number of unit modulus: $z = e^{i\phi}$, where $\phi \in (-\pi, \pi]$ is a continuous phase angle.
Unlike qubits, whose state vectors reside in a linear Hilbert space allowing for true quantum superposition, phasor states begin strictly constrained to the $N$-Torus ($\mathbb{T}^N$), a compact, non-linear manifold. 
Because the $N$-Torus is not closed under addition, linear interference (mixing) naturally shifts the state off the manifold into the broader $\mathbb{C}^N$ complex space. Unlike rigid digital logic, this transient departure from unit magnitude allows phasor networks to naturally scale continuous wave interference dynamically across layers.
This comparison is summarized in \Cref{tab:three_paradigms}.

\begin{table}[ht]
    \centering
    \caption{Summary matrix of the three computational paradigms.}
    \label{tab:three_paradigms}
    \begin{tabular}{@{}lccc@{}}
        \toprule
        \textbf{Feature} & \textbf{Classical Bit} & \textbf{Unit Circle} & \textbf{Qubit} \\ 
        \midrule
        \textbf{State Space} & $2^N$ Discrete Points & Continuous Angles & Hilbert Space \\ 
        \textbf{Parameters} & 0 (Rigid Logic) & $N$ (Linear Scaling) & $2^{N+1}-2$ (Exponential) \\ 
        \textbf{Manifold} & 0D Hypercube & $N$-Torus ($\mathbb{T}^N$) & Complex Projective $\mathbb{CP}^{2^N-1}$ \\ 
        \textbf{Gate Type} & Logic (AND/OR) & Rotation Matrices & Unitary Matrices \\ 
        \textbf{Execution} & Deterministic & Deterministic (Phase) & Probabilistic (Superposition) \\ 
        \textbf{Connection} & Wire Connectors & Phase Coupling / Mixing & Entanglement \\ 
        \bottomrule
    \end{tabular}
\end{table}

In this paper, we present \textbf{PhasorFlow}, a Python library that provides a complete framework for \emph{unit circle based computing}.
PhasorFlow draws design inspiration from quantum computing frameworks such as Qiskit \cite{qiskit2024}, adopting a circuit-based programming model in which a user defines a circuit of $N$ phasor threads (unit circles) and applies a sequence of $M$ gate operations.
However, unlike quantum simulators, PhasorFlow circuits are evaluated analytically through direct matrix multiplication, yielding deterministic outputs without sampling noise.

The development of PhasorFlow is fundamentally motivated by the prevalence of complex spatio-temporal dynamics across diverse real-world domains. Many natural and artificial systems are inherently oscillatory or cyclical, rendering traditional Euclidean representations less effective. Continuous dynamical systems often present complex periodic boundary interactions that are difficult to isolate with standard dense linear layers. In neuroscience, Brain-Computer Interface (BCI) data consists of continuous, high-density neural signals that are fundamentally spatio-temporal and oscillatory in nature, naturally demanding phase-based analysis \cite{buzsaki2006rhythms}. By elevating computation natively to the unit circle, PhasorFlow provides a mathematical architecture optimally aligned for encoding, mixing, and extracting phase-based interaction dynamics inherent in these complex systems.

The contributions of this work are threefold:

\begin{enumerate}
    \item \textbf{Phasor Circuit Formalism.}
    We define a complete gate algebra for unit circle computation comprising an expanded library of 22 gates categorized into Standard Unitary, Non-Linear, Neuromorphic, and Encoding operations.
    We prove that the linear gate set forms a unitary group action on the $N$-torus $\mathbb{T}^N = (S^1)^N$, and demonstrate the capability of PhasorFlow to perform Neuromorphic computing (such as neural binding and oscillatory associative memory) natively.

    \item \textbf{Variational Phasor Circuits (VPC).}
    We introduce trainable phasor circuits for machine learning, analogous to Variational Quantum Circuits \cite{benedetti2019parameterized, cerezo2021variational} but operating entirely on classical hardware; full mathematical treatment, capacity analysis, and real motor-imagery EEG validation are presented in a companion manuscript \cite{sigdel2026vpc}.

    \item \textbf{Phasor Transformer and Large Phasor Model (LPM).}
    Inspired by the FNet architecture \cite{lee2021fnet}, we construct a Phasor Transformer that replaces $\mathcal{O}(n^2)$ self-attention with a parameter-free DFT-based token mixing layer, and define the LPM as a deep stack of such blocks; full architecture specification, depth scaling, and time-series benchmarks are presented in a companion manuscript \cite{sigdel2026lpm}.
\end{enumerate}

The remainder of this paper is organized as follows.
\Cref{sec:theory} establishes the mathematical foundations of $\unitary(1)$ phasor computing.
\Cref{sec:methods} details the three methodological contributions: phasor circuits, VPCs (overview), and Phasor Transformers (overview).
\Cref{sec:implementation} describes the PhasorFlow software architecture.
\Cref{sec:applications} presents framework-unique applications: financial volatility detection, algorithmic logic, period finding, and neuromorphic computing.
\Cref{sec:results} reports experimental results for the framework applications.
\Cref{sec:discussion} and \Cref{sec:conclusion} provide discussion, future directions, and concluding remarks.


\section{Theory}
\label{sec:theory}

This section establishes the mathematical framework for unit circle computing.
We first define phasor states on the $N$-torus, then formalize the unitary operators that act on those states. Throughout this section, $\phi$ denotes phasor state angles and $\theta$ denotes unitary-operator parameters.

\subsection{State Space}
\label{subsec:state_space}

\begin{definition}[Phasor Circuit State]
For $N$ computational threads, a phasor circuit state is a vector
\begin{equation}
\boldsymbol{z}=(z_1,\dots,z_N)^\top\in\complex^N,\qquad z_k=e^{i\phi_k},\ \phi_k\in(-\pi,\pi],
\end{equation}
with admissible encoded states constrained to $\mathbb{T}^N=(S^1)^N$ by $|z_k|=1$ for all $k$. We adopt the principal-branch convention $\phi_k=\arg(z_k)\in(-\pi,\pi]$ throughout, consistent with the companion VPC \cite{sigdel2026vpc} and LPM \cite{sigdel2026lpm} papers and with the $\mathrm{atan2}$-based phase extraction used in the implementation.
\end{definition}

A phasor circuit with $N$ independent unit circles (which we term \emph{threads}) has its state residing on the $N$-torus:
\begin{equation}
    \mathbb{T}^N = \underbrace{S^1 \times S^1 \times \cdots \times S^1}_{N} = (S^1)^N.
    \label{eq:n_torus}
\end{equation}

The state of the system is represented as a complex vector:
\begin{equation}
    \boldsymbol{z} = \begin{pmatrix} z_1 \\ z_2 \\ \vdots \\ z_N \end{pmatrix} = \begin{pmatrix} e^{i\phi_1} \\ e^{i\phi_2} \\ \vdots \\ e^{i\phi_N} \end{pmatrix} \in \complex^N, \quad |z_k| = 1 \;\forall\; k.
    \label{eq:state_vector}
\end{equation}

The initial state of the circuit is defined as all phasors at zero phase:
\begin{equation}
    \boldsymbol{z}_0 = \begin{pmatrix} 1 \\ 1 \\ \vdots \\ 1 \end{pmatrix},
    \label{eq:initial_state}
\end{equation}corresponding to $\phi_k = 0$ for all threads $k = 1, \ldots, N$.

It is crucial to distinguish this $\mathbb{T}^N$ geometry from a standard linear vector space (such as the $\mathbb{R}^N$ hidden states of classical neural networks or the $\mathbb{C}^{2^N}$ Hilbert space of quantum mechanics). The $N$-Torus is a compact, non-linear manifold. While the state is represented mathematically as a complex vector, the strict constraint that $|z_k| = 1$ implies that the linear addition of two valid states $\boldsymbol{z}_A + \boldsymbol{z}_B$ generally does not produce a valid state on the torus. The loss of the linear superposition principle is traded for absolute geometric stability---the state cannot explode to infinity.

\begin{proposition}[Torus-Preserving Diagonal Action]
Let $D=\mathrm{diag}(e^{i\theta_1},\dots,e^{i\theta_N})\in U(1)^N$ be a diagonal phase-shift operator with gate parameters $\theta_k\in\mathbb{R}$ (the map $\theta\mapsto e^{i\theta}$ is $2\pi$-periodic), and let $\boldsymbol{z}\in\mathbb{T}^N$. Then $D\boldsymbol{z}\in\mathbb{T}^N$.
\end{proposition}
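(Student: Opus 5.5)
The plan is a direct componentwise computation; the statement reduces to the multiplicativity of the complex modulus. Since $D$ is diagonal, the product $D\boldsymbol{z}$ has components
\begin{equation*}
(D\boldsymbol{z})_k = e^{i\theta_k} z_k, \qquad k=1,\dots,N,
\end{equation*}
with no cross-terms between threads. The first step is therefore to write out this componentwise form, so that the question of membership in $\mathbb{T}^N$ decouples into $N$ independent scalar statements on $S^1$.

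The second step is to verify $|e^{i\theta_k}|=1$ for real $\theta_k$. This follows from Euler's formula, since $|\cos\theta_k + i\sin\theta_k|^2 = \cos^2\theta_k+\sin^2\theta_k = 1$. Then, by multiplicativity of the modulus, $|(D\boldsymbol{z})_k| = |e^{i\theta_k}|\,|z_k| = 1\cdot 1 = 1$ for every $k$. By the definition of the Phasor Circuit State, this is exactly the condition $D\boldsymbol{z}\in\mathbb{T}^N$.

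As an optional remark, I would make the phase picture explicit. Writing $z_k=e^{i\phi_k}$, we get $(D\boldsymbol{z})_k=e^{i(\phi_k+\theta_k)}$, so the action is a translation of angles on the torus. The resulting angle $\phi_k+\theta_k$ may leave $(-\pi,\pi]$, but it can be reduced modulo $2\pi$ to the principal branch using the $2\pi$-periodicity of $\theta\mapsto e^{i\theta}$ noted in the statement. This reduction does not affect the point $e^{i(\phi_k+\theta_k)}$ itself, only its angle representative.

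There is no substantive obstacle here. The only care needed is the bookkeeping between the point on $S^1$ and its principal-branch angle: the angle itself may need reducing, but the modulus argument does not depend on that choice.
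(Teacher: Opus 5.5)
Your proposal is correct and takes the same approach as the paper: write $(D\boldsymbol{z})_k=e^{i\theta_k}z_k$ componentwise, then use multiplicativity of the modulus to get $|(D\boldsymbol{z})_k|=1$. Your Euler-formula check of $|e^{i\theta_k}|=1$ and the remark on principal-branch reduction are harmless elaborations that the paper omits.
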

\begin{proof}
Each coordinate transforms as $(D\boldsymbol{z})_k=e^{i\theta_k}z_k$ and therefore $|(D\boldsymbol{z})_k|=|e^{i\theta_k}|\,|z_k|=1$.
\end{proof}

\begin{figure}[ht]
\centering
\begin{tikzpicture}[scale=0.9, transform shape]
    \begin{scope}[shift={(0,0)}]
        \draw[thick, blue!80, fill=blue!10] (0,0) circle (0.8);
        \draw[->, thick, red] (0,0) -- (45:0.8) node[above right, text=black] {$\phi_1$};
        \fill[black] (0,0) circle (1.5pt);
        \node[yshift=-1.2cm] at (0,0) {1 Unit ($S^1$)};
    \end{scope}

    \begin{scope}[shift={(4,0)}]
        \draw[thick, blue!80, fill=blue!10] (0,0) ellipse (1.5 and 0.8);
        \draw[thick, blue!80] (-0.6,0.1) arc (140:40:0.8);
        \draw[thick, blue!80] (-0.5,0.25) arc (-150:-30:0.6);
        
        \draw[thick, red] (1.05, 0) ellipse (0.45 and 0.55);
        \node[text=red] at (1.65, -0.4) {$\phi_1$};
        
        \draw[thick, green!60!black, dashed] (0,0) ellipse (1.2 and 0.4);
        \draw[thick, green!60!black] (-1.2,0) arc (180:360:1.2 and 0.4);
        \node[text=green!60!black] at (0, -0.6) {$\phi_2$};
        
        \fill[black] (1.05,-0.2) circle (1.5pt);
        
        \node[yshift=-1.2cm] at (0,0) {2 Units ($T^2$)};
    \end{scope}

    \begin{scope}[shift={(9,0)}]
        \node[yshift=-1.2cm] at (0.5,0) {3 Units ($T^3$)};
        \draw[thick, gray!60, dashed] (0,0) -- (1.5,0) -- (1.5,1.5) -- (0,1.5) -- cycle;
        \draw[thick, gray!60, dashed] (0.5,0.5) -- (2.0,0.5) -- (2.0,2.0) -- (0.5,2.0) -- cycle;
        \draw[thick, gray!60, dashed] (0,0) -- (0.5,0.5); \draw[thick, gray!60, dashed] (1.5,0) -- (2.0,0.5);
        \draw[thick, gray!60, dashed] (1.5,1.5) -- (2.0,2.0); \draw[thick, gray!60, dashed] (0,1.5) -- (0.5,2.0);
        
        \draw[->, thick, red] (0,0) -- (1.5,0) node[midway, below, text=black] {$\phi_1$};
        \draw[->, thick, green!60!black] (0,0) -- (0,1.5) node[midway, left, text=black, xshift=-0.05cm] {$\phi_2$};
        \draw[->, thick, orange] (0,0) -- (0.5,0.5) node[right, text=black] {$\phi_3$};
        
        \node[text=red, fill=white, inner sep=0.5pt] at (0.75,1.5) {\footnotesize $\approx$};
        \node[text=red, fill=white, inner sep=0.5pt] at (0.75,0) {\footnotesize $\approx$};
        \node[text=green!60!black, fill=white, inner sep=0.5pt] at (1.5,0.75) {\footnotesize $\approx$};
        \node[text=green!60!black, fill=white, inner sep=0.5pt] at (0,0.75) {\footnotesize $\approx$};
    \end{scope}
\end{tikzpicture}
\caption{The geometric evolution of the internal phase manifold strictly depends connecting dimension lines on boundaries. An $N$-thread state resolves mathematically onto the periodic coordinate map of an $N$-Torus ($T^N$).}
\label{fig:torus_manifold}
\end{figure}
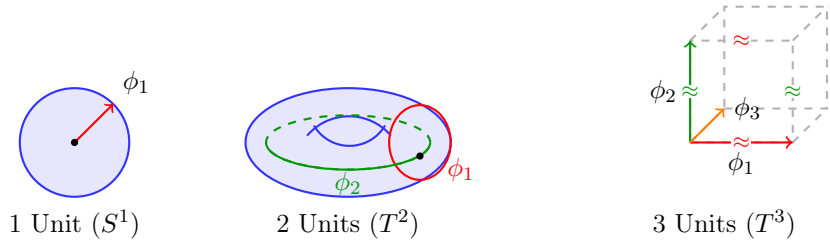

\subsection{Unitary Operations}
\label{subsec:unitary_ops}

Unitary operators act on phasor state vectors in the ambient vector space $\mathbb{C}^N$. A linear operator $U$ is unitary when
\begin{equation}
    U U^\dagger = U^\dagger U = I, \quad \text{where } U^\dagger = \bar{U}^T.
    \label{eq:unitarity}
\end{equation}

The three operator structures used by PhasorFlow are:

\paragraph{$\mathrm{U}(1)$: single-thread phase rotation.}
A one-thread unitary has the scalar form
\begin{equation}
u(\theta)=e^{i\theta}\in \mathrm{U}(1), \qquad \theta\in\mathbb{R}.
\label{eq:u1_scalar}
\end{equation}
Embedded into an $N$-thread state, this becomes a diagonal operator acting on one coordinate:
\begin{equation}
S_k(\theta)=\mathrm{diag}(1,\ldots,e^{i\theta},\ldots,1)\in \mathrm{U}(N).
\label{eq:u1_embedded}
\end{equation}

\paragraph{$\mathrm{U}(2)$: two-thread mixing.}
A two-thread mixing operator acts on a 2D subspace as
\begin{equation}
M(\theta)=\begin{pmatrix} a(\theta) & b(\theta) \\ c(\theta) & d(\theta) \end{pmatrix}\in \mathrm{U}(2),
\label{eq:u2_generic}
\end{equation}
with orthonormal columns/rows implied by \Cref{eq:unitarity}. When $\det M(\theta)=1$, the operator belongs to $\mathrm{SU}(2)$.

\paragraph{$\mathrm{U}(N)$: global mixing.}
Global couplers act on all threads simultaneously:
\begin{equation}
W(\theta)\in \mathrm{U}(N),
\label{eq:un_generic}
\end{equation}
with the DFT matrix as a canonical parameter-free example in this class.

Hence, Shift, Mix, and DFT are interpreted at the level of group structure as
\begin{equation}
\text{Shift: } \mathrm{U}(1), \qquad
\text{Mix: } \mathrm{U}(2), \qquad
\text{DFT/global mixing: } \mathrm{U}(N).
\label{eq:gate_group_membership}
\end{equation}
If $\det(U)=1$, the corresponding operation is special unitary ($\mathrm{SU}(n)$).

Under a unitary transformation, the state evolves as:
\begin{equation}
    \boldsymbol{z}' = U \boldsymbol{z}.
    \label{eq:state_evolution}
\end{equation}

Since unitary matrices preserve the $\ell^2$-norm of complex vectors, $\|\boldsymbol{z}'\| = \|U\boldsymbol{z}\| = \|\boldsymbol{z}\|$, the total energy of the system is conserved.
For a circuit with $M$ sequential gates $U_1, U_2, \ldots, U_M$, the final state is:
\begin{equation}
    \boldsymbol{z}_{\text{final}} = U_M \cdots U_2 \cdot U_1 \cdot \boldsymbol{z}_0 = \left(\prod_{m=1}^{M} U_m \right) \boldsymbol{z}_0.
    \label{eq:circuit_evolution}
\end{equation}

The product of unitary matrices is itself unitary, so the entire linear circuit represents a single composite isometric operation.

However, a critical geometric shift arises during execution. While a unitary mixing operation (such as $\mathrm{U}(2)$ Mix or global $\mathrm{U}(N)$ mixing) preserves the global $\ell^2$-norm $\|\boldsymbol{z}'\| = \|\boldsymbol{z}\|$, individual thread magnitudes $|z_k'|$ generally drift away from $1$ due to interference. Thus, the state departs from the $N$-torus manifold into the ambient vector space $\mathbb{C}^N$.

This drift is intentionally allowed during linear propagation for representational expressivity. A pull-back mechanism is then introduced in the Methods section (non-linear gates) to re-project states back toward $\mathbb{T}^N$ before the next cycle when topology control is required.

\begin{theorem}[Unitary Energy Conservation with Coordinate Drift]
Let $U\in U(N)$ and $\boldsymbol{z}\in\mathbb{T}^N$. Then
\begin{equation}
\|U\boldsymbol{z}\|_2=\|\boldsymbol{z}\|_2=\sqrt{N}.
\end{equation}
If $U$ is non-diagonal, there exists $\boldsymbol{z}\in\mathbb{T}^N$ such that $U\boldsymbol{z}\notin\mathbb{T}^N$.
\end{theorem}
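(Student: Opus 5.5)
The first claim is immediate. Since $U^\dagger U=I$, we have $\|U\boldsymbol{z}\|_2^2=\boldsymbol{z}^\dagger U^\dagger U\boldsymbol{z}=\|\boldsymbol{z}\|_2^2$. For $\boldsymbol{z}\in\mathbb{T}^N$ this equals $\sum_k|z_k|^2=N$, because every coordinate has unit modulus.

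For the drift claim I argue by contraposition. Suppose $U\boldsymbol{z}\in\mathbb{T}^N$ for every $\boldsymbol{z}\in\mathbb{T}^N$; I will show that $U$ must be diagonal. Write the entries as $u_{jk}$ and fix a row $j$. The hypothesis says that the trigonometric polynomial
\[
f_j(\phi)=\Bigl|\sum_k u_{jk}e^{i\phi_k}\Bigr|^2=\sum_k|u_{jk}|^2+\sum_{k\neq l}u_{jk}\bar u_{jl}e^{i(\phi_k-\phi_l)}
\]
equals $1$ identically in $\phi\in\mathbb{R}^N$. The characters $e^{i(\phi_k-\phi_l)}$ with $k\neq l$ are pairwise distinct and nonconstant. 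Integrating $f_j$ against $e^{-i(\phi_k-\phi_l)}$ over the torus (Fourier uniqueness on $\mathbb{T}^N$) therefore gives $u_{jk}\bar u_{jl}=0$ for all $k\neq l$. So each row of $U$ has at most one nonzero entry. Unitarity makes each row a unit vector, so each row has exactly one nonzero entry, of modulus $1$.

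The main obstacle is that this conclusion does not give diagonality. It only shows that $U=PD$ with $P$ a permutation matrix and $D$ diagonal unitary, and such monomial unitaries do map $\mathbb{T}^N$ into itself. For example, the swap $\begin{pmatrix}0&1\\1&0\end{pmatrix}$ is non-diagonal but preserves the torus. The second claim is therefore false as literally stated. The correct statement, which the same argument proves, is: \emph{if $U$ is not a monomial (generalized permutation) matrix, then some $\boldsymbol{z}\in\mathbb{T}^N$ has $U\boldsymbol{z}\notin\mathbb{T}^N$}. For a proof of that corrected version I would state it explicitly, noting that "non-diagonal" must be read as "having a row with two nonzero entries". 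This covers every genuine Mix and DFT gate, which are the gates the paper has in mind.

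The existence proof can also be made concrete. Suppose row $j$ has $u_{jk}\neq0$ and $u_{jl}\neq0$ with $k\neq l$. Choose $\phi_k,\phi_l$ so that the phases of $u_{jk}e^{i\phi_k}$ and $u_{jl}e^{i\phi_l}$ agree with each other. Then let the remaining phases be either aligned with them or shifted by $\pi$. This produces two values of $|(U\boldsymbol{z})_j|$ that differ by $4\sqrt{|u_{jk}u_{jl}|}$-type margins, so they cannot both equal $1$. I would still present the Fourier argument as the main proof, since it is cleaner.
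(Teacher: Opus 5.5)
Your norm computation coincides with the paper's. Your counterexample to the second claim is correct, and the error lies in the paper, not in your argument. The paper's proof asserts that for non-diagonal $U$ ``some output coordinate is a non-trivial linear combination of at least two unit-modulus entries.'' That is exactly the inference that fails for monomial matrices. The paper's own Permute gate $P_\pi$, or any $P_\pi\prod_k S_k(\theta_k)$, is a non-diagonal unitary mapping $\mathbb{T}^N$ onto itself. The paper then only says the modulus is ``generally not preserved,'' without exhibiting a witness $\boldsymbol{z}$.

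Your contrapositive via the Fourier coefficients of $f_j$ is a different and complete route. It proves that a unitary maps $\mathbb{T}^N$ into itself only if $u_{jk}\bar u_{jl}=0$ for all $j$ and all $k\neq l$. So the right hypothesis is ``some row of $U$ has two nonzero entries,'' equivalently $U\notin\mathrm{U}(1)^N\rtimes S_N$, the monomial unitaries (the normalizer of the maximal torus). This hypothesis covers every Mix gate and every $F_N$ with $N\ge 2$. To pass from ``one nonzero entry per row'' to $U=PD$, you also need the nonzero positions to lie in distinct columns. That holds because the columns of $U$ are unit vectors; the corrected drift statement does not need it.

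Your concrete sketch, by contrast, is not right as written. If row $j$ has only the two nonzero entries $u_{jk},u_{jl}$ (as for a Mix gate), the ``remaining phases'' do not enter $(U\boldsymbol{z})_j$. Both of your configurations then give the same value, and the ``two different values'' reasoning establishes nothing. The simplest explicit witness is to align every term; this is also the rigorous form of the paper's interference intuition (compare its appendix computation $|z_j'|=\sqrt{N}$ for the DFT). Take $\phi_m=-\arg u_{jm}$ whenever $u_{jm}\neq0$ (arbitrary otherwise). Then $(U\boldsymbol{z})_j=\sum_m|u_{jm}|$. Because the row is a unit vector, $\bigl(\sum_m|u_{jm}|\bigr)^2\ge 1+2|u_{jk}||u_{jl}|>1$. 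Since you intend the Fourier argument as the main proof, this does not affect correctness. If you include an explicit construction, though, use this one.
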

\begin{proof}
Unitarity gives norm preservation directly. For non-diagonal $U$, some output coordinate is a non-trivial linear combination of at least two unit-modulus entries, so coordinatewise modulus is generally not preserved under interference.
\end{proof}

\begin{corollary}[Ambient-Space Propagation]
Intermediate states of VPC and Phasor Transformer layers are naturally propagated in $\mathbb{C}^N$ even when encoded inputs lie on $\mathbb{T}^N$.
\end{corollary}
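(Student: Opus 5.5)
The corollary follows from the preceding theorem, applied to the specific operators that VPC and Phasor Transformer layers use. The claim has two parts. First, intermediate states are well defined and bounded in $\mathbb{C}^N$. Second, they generically leave $\mathbb{T}^N$, so propagation must be modelled in the ambient space $\mathbb{C}^N$ rather than on the torus.

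\textbf{Step 1: reduce each layer to a unitary product.} I will model the linear part of each layer as a finite product of operators from the three classes listed in \Cref{eq:gate_group_membership}. These are the diagonal shifts $S_k(\theta)$, the two-thread mixers $M(\theta)$ embedded in $\mathrm{U}(N)$, and global mixers $W$ such as the DFT. By \Cref{eq:circuit_evolution}, this product is a single $U\in\mathrm{U}(N)$.

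\textbf{Step 2: bound the states.} Applying the norm statement of the theorem, any encoded input $\boldsymbol{z}\in\mathbb{T}^N$ maps to $U\boldsymbol{z}$ with $\|U\boldsymbol{z}\|_2=\sqrt{N}$. So the intermediate states lie on the sphere of radius $\sqrt{N}$ in $\mathbb{C}^N$, which is well defined and bounded.

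\textbf{Step 3: exhibit non-diagonal operators.} To show the states actually leave the torus, I need the layer operators to be non-diagonal, so that the second half of the theorem applies. For the Phasor Transformer this is immediate: the DFT matrix $F_{jk}=N^{-1/2}\omega^{jk}$ has all entries nonzero for $N\ge 2$. For example, $F\boldsymbol{z}_0=\sqrt{N}\,e_1$, which lies off $\mathbb{T}^N$ since its first coordinate has modulus $\sqrt{N}\neq 1$. For the VPC, any layer containing a mixing gate $M(\theta)$ with $b(\theta)\neq 0$ is non-diagonal, and the theorem supplies a torus input that is sent off $\mathbb{T}^N$.

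\textbf{Main obstacle: making ``naturally'' and ``generically'' precise.} The theorem only asserts existence of \emph{some} input leaving the torus. To justify the word ``naturally'', I will argue that the set $\{\boldsymbol{z}\in\mathbb{T}^N : U\boldsymbol{z}\in\mathbb{T}^N\}$ is the zero set of the real-analytic functions $|(U\boldsymbol{z})_j|^2-1$ on $\mathbb{T}^N$. Since at least one of these functions is not identically zero, this set is a proper analytic subset and hence has measure zero. Generic encoded inputs therefore propagate strictly in $\mathbb{C}^N\setminus\mathbb{T}^N$.

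Nonlinear pull-back gates, if present, only re-project between layers. This is consistent with the corollary, which concerns the states inside the linear stages.
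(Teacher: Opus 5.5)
Your route matches the paper's. The paper gives the corollary no separate proof; it reads it off the preceding theorem for the VPC layer $U_{\mathrm{local}}\prod_k S_k(\theta_k)$ and the block $S(\boldsymbol{\theta}^{\mathrm{post}})F_T S(\boldsymbol{\theta}^{\mathrm{pre}})$. Your Phasor Transformer step is sound, because you give the explicit witness $F\boldsymbol{z}_0=\sqrt{N}e_1$ (pre-composing with $S(\boldsymbol{\theta}^{\mathrm{pre}})^{-1}$, which preserves $\mathbb{T}^N$). Your zero-set argument is a correct strengthening of ``naturally'' that the paper does not supply: the torus-preserving inputs form a null set.

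The gap is in the VPC half of Step 3. There you cite the theorem to pass from ``non-diagonal'' (or ``$b(\theta)\neq 0$'') to ``some torus input leaves $\mathbb{T}^N$.'' That implication is false. Any unitary with exactly one nonzero entry in each row, i.e.\ a permutation times a diagonal phase, is non-diagonal yet maps $\mathbb{T}^N$ onto itself. The paper's own setting contains such matrices:
\begin{itemize}
\item the Permute gate;
\item an anti-diagonal $\mathrm{U}(2)$ mixer ($a=d=0$, $b\neq 0$);
\item $M_{jk}^2=i\,P_{jk}$, with $P_{jk}$ the swap.
\end{itemize}
So a ``layer'' built from your three operator classes can contain mixers with $b\neq 0$ and still preserve the torus. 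For instance, two stacked VPC layers with zero second-layer shifts compose to exactly such a monomial matrix.

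The theorem's second half has the same defect as stated. Its one-line proof tacitly assumes some row of $U$ has two nonzero entries, and you inherit that assumption. The fix is to use that hypothesis directly. If row $j$ of $U$ has at least two nonzero entries, set $z_k=\overline{u_{jk}}/|u_{jk}|$ on its support. Then $|(U\boldsymbol{z})_j|=\sum_k|u_{jk}|>\bigl(\sum_k|u_{jk}|^2\bigr)^{1/2}=1$.

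For a single VPC layer this hypothesis holds, because $U_{\mathrm{local}}$ is block diagonal with blocks $\tfrac{1}{\sqrt{2}}\begin{pmatrix}1&i\\ i&1\end{pmatrix}$. Equivalently, you can use the appendix formula $|z_j'|^2=1+\sin(\phi_j-\phi_k)$. Apply it at the first mixing stage, since the corollary concerns intermediate states rather than the composite product. This also supplies the fact $|(U\boldsymbol{z})_j|^2-1\not\equiv 0$ that your measure-zero argument needs.
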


\begin{remark}[State Space vs.\ Operator Space]
It is essential to distinguish two distinct mathematical objects throughout this paper.
\begin{enumerate}
  \item \textbf{Phasor state vectors} $\boldsymbol{z}\in\mathbb{C}^N$ are \emph{elements of a vector space}.  When all coordinates satisfy $|z_k|=1$, the state lies on the sub-manifold $\mathbb{T}^N\subset\mathbb{C}^N$; after unitary mixing it may leave $\mathbb{T}^N$ while remaining in $\mathbb{C}^N$.  The phase angle of coordinate $k$ is denoted $\phi_k=\arg(z_k)\in(-\pi,\pi]$.
    \item \textbf{Gate operators} $G\in\mathrm{U}(N)$ are \emph{group elements acting on $\mathbb{C}^N$}.  Their internal parameters are denoted $\theta$.  Shift operations are $\mathrm{U}(1)$ on individual threads (or $\mathrm{U}(1)^N$ diagonally), Mix operations are $\mathrm{U}(2)$ on thread pairs, and DFT operations are global $\mathrm{U}(N)$ maps.  When an operator has determinant one, it is in the corresponding special unitary group $\mathrm{SU}(n)$.
\end{enumerate}
In summary: $\phi$ (Greek phi) always denotes a \emph{phasor angle} (state); $\theta$ (Greek theta) always denotes a \emph{gate rotation parameter} (operator).
\end{remark}

\subsection{Phase Coherence and Interference}
\label{subsec:coherence}

A key observable in phasor systems is the \emph{phase coherence}, which measures the degree of alignment among the $N$ phasors:
\begin{equation}
    \mathcal{C} = \frac{1}{N} \left| \sum_{k=1}^{N} z_k \right| = \frac{1}{N} \left| \sum_{k=1}^{N} e^{i\phi_k} \right|.
    \label{eq:coherence}
\end{equation}

When all phasors are perfectly aligned ($\phi_k = \phi$ for all $k$), the coherence reaches its maximum $\mathcal{C} = 1$.
When phases are uniformly distributed around the circle, destructive interference drives $\mathcal{C} \to 0$.
This observable provides a natural, parameter-free measure of the internal structure of a phasor state, and will be used as a feature detector in our applications.

\subsection{Manifold Comparison}
\label{subsec:comparison}

\Cref{tab:manifold_comparison} summarizes the key differences between the computational manifolds on the Geometric Ladder.

\begin{table}[ht]
    \centering
    \caption{Comparison of computational manifolds.}
    \label{tab:manifold_comparison}
    \begin{tabular}{@{}lccc@{}}
        \toprule
        \textbf{Property} & \textbf{Bit} & \textbf{Phasor ($S^1$)} & \textbf{Qubit ($\mathbb{CP}^1$)} \\
        \midrule
        Manifold dimension      & 0           & 1                 & 2 \\
        State parameters        & 1 (binary)  & 1 ($\phi$)        & 2 ($\theta, \phi$) \\
        Deterministic           & Yes         & Yes               & No \\
        Interference            & No          & Yes               & Yes \\
        Superposition           & No          & No                & Yes \\
        Group structure         & $\mathbb{Z}_2$ & $\unitary(1)$  & $\mathrm{SU}(2)$ \\
        Hardware requirement    & Classical   & Classical         & Quantum \\
        \bottomrule
    \end{tabular}
\end{table}

The phasor model inherits the classical wave interference property (enabling constructive and destructive combination) while remaining fully deterministic and executable on standard architecture without requiring quantum superposition or non-local quantum entanglement.
This positions $S^1$ computing as an intermediate paradigm---more expressive than bit-based computing for oscillatory dynamics, yet fundamentally classical and strictly practical for immediate application.


\section{Method}
\label{sec:methods}

This section presents the three principal methodological contributions of PhasorFlow: the Phasor Circuit model (\Cref{subsec:phasor_circuits}), the Variational Phasor Circuit for machine learning (\Cref{subsec:vpc}), and the Phasor Transformer architecture for sequence modeling (\Cref{subsec:phasor_transformer}).

\subsection{Phasor Circuits}
\label{subsec:phasor_circuits}

A \emph{Phasor Circuit} $\mathcal{P}(N, M)$ is defined by $N$ unit circle threads and an ordered sequence of $M$ gate operations.
The circuit maps an initial state $\boldsymbol{z}_0 \in \mathbb{T}^N$ to a final state $\boldsymbol{z}_f$ through sequential application of gate matrices.
The PhasorFlow library organizes 22 core gate primitives into four operational categories: Standard Unitary, Non-Linear, Neuromorphic, and Encoding gates.

\begin{figure}[ht]
\centering
\begin{tikzpicture}[scale=1.2]
    \foreach \i in {1,...,5} {
        \draw[thick] (0, 6-\i) node[left] {$\phi_\i$} -- (7.5, 6-\i);
    }
    
    \draw[thick, fill=green!10] (1.0, 4.7) rectangle (1.8, 5.3); \node at (1.4, 5.0) {$S$};
    \draw[thick, fill=green!10] (1.0, 1.7) rectangle (1.8, 2.3); \node at (1.4, 2.0) {$S$};
    
    \draw[thick, fill=orange!10] (2.5, 3.8) rectangle (3.3, 5.2); \node at (2.9, 4.5) {$M$};
    \draw[thick, fill=orange!10] (2.5, 1.8) rectangle (3.3, 3.2); \node at (2.9, 2.5) {$M$};
    
    \draw[thick, fill=green!10] (4.0, 3.7) rectangle (4.8, 4.3); \node at (4.4, 4.0) {$S$};
    
    \draw[thick, fill=orange!10] (5.5, 2.8) rectangle (6.3, 4.2); \node at (5.9, 3.5) {$M$};
    
    \foreach \i in {1,...,5} {
        \node[right] at (7.5, 6-\i) {$\phi_\i'$};
    }
\end{tikzpicture}
\caption{An example of an $N=5$ continuous Phasor Circuit constructed from parameterized Shift ($S$) gates and fixed entangling Mix ($M$) gates, visually analogous to a parameterized quantum circuit cascade.}
\label{fig:n5_phasor_circuit}
\end{figure}
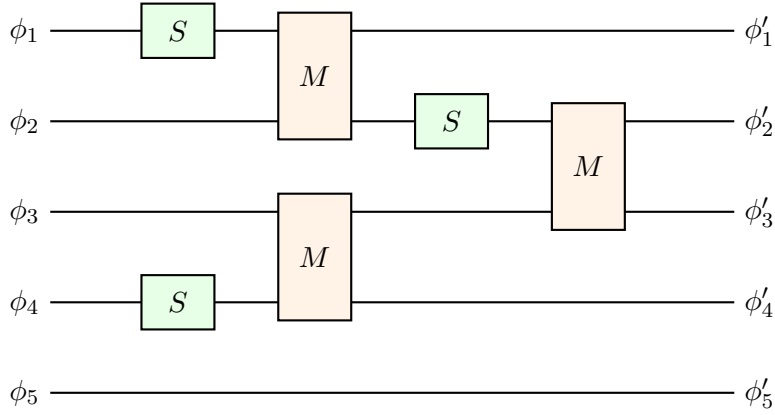
\subsubsection{Shift Gate $S(\theta)$}

The Shift gate applies a phase rotation of trainable angle $\theta \in \mathbb{R}$ (defined modulo $2\pi$) to a single thread $k$.
Acting on the full $N$-dimensional state, it is represented as an $N \times N$ diagonal matrix with $e^{i\theta}$ at position $(k, k)$ and ones elsewhere:
\begin{equation}
    S_k(\theta) = \text{diag}(1, \ldots, \underbrace{e^{i\theta}}_{k\text{-th}}, \ldots, 1).
    \label{eq:shift_gate}
\end{equation}

Since $|e^{i\theta}| = 1$, the diagonal matrix $S_k(\theta)$ is trivially unitary: $S_k(\theta) S_k(\theta)^\dagger = I$.
Applied to a single phasor state $e^{i\phi}$, the $1 \times 1$ matrix form is simply:
\begin{equation}
    S(\theta) = \begin{pmatrix} e^{i\theta} \end{pmatrix},
    \label{eq:shift_1x1}
\end{equation}
producing $S(\theta)\,e^{i\phi} = e^{i(\phi+\theta)}$.
In variational circuits, $\theta$ serves as a trainable phase weight, analogous to rotation parameters in parameterized quantum circuits \cite{schuld2020circuit}.

\textbf{Group membership.}  $S_k(\theta)$ acts as a $\mathrm{U}(1)$ operation on the target thread; embedded in the full $N$-dimensional operator algebra it becomes a diagonal element of $\mathrm{U}(1)^N \subset \mathrm{U}(N)$ with determinant $\det S_k(\theta)=e^{i\theta}$. It is in $\mathrm{SU}(N)$ only when $\theta=0$ (identity). The $N$-fold product of independent Shift gates $\prod_{k=1}^N S_k(\theta_k)$ generates the maximal torus $\mathrm{U}(1)^N$ of $\mathrm{U}(N)$.

\begin{figure}[ht]
\centering
\begin{tikzpicture}[scale=1.2]
    \draw[thick] (0,0) -- (6.0,0);
    \node[left] at (0,0) {$\phi_{\mathrm{in}}$};
    \node[right] at (6.0,0) {$\phi_{\mathrm{out}} = \phi_{\mathrm{in}} + \theta$};
    \draw[thick, fill=green!10] (0.5,-0.4) rectangle (5.8,0.4);
    \node at (3.15,0) {\textbf{Shift} $S(\theta)$: $e^{i\phi} \mapsto e^{i(\phi+\theta)}$};
\end{tikzpicture}
\caption{Circuit representation of the Shift gate acting on a single computation thread.}
\label{fig:shift_gate}
\end{figure}
\subsubsection{Mix Gate $M_{jk}$}

The Mix gate creates interference between two threads $j$ and $k$.
It is defined as a $2 \times 2$ unitary matrix that acts as a 50/50 beam splitter:
\begin{equation}
    M_{jk} = \frac{1}{\sqrt{2}} \begin{pmatrix} 1 & i \\ i & 1 \end{pmatrix}.
    \label{eq:mix_gate}
\end{equation}

Verification of unitarity:
\begin{equation}
    M_{jk} M_{jk}^\dagger = \frac{1}{2} \begin{pmatrix} 1 & i \\ i & 1 \end{pmatrix} \begin{pmatrix} 1 & -i \\ -i & 1 \end{pmatrix} = \begin{pmatrix} 1 & 0 \\ 0 & 1 \end{pmatrix} = I.
    \label{eq:mix_unitary_proof}
\end{equation}

The Mix gate transforms a pair of phasors $(z_j, z_k)$ as:
\begin{equation}
    \begin{pmatrix} z_j' \\ z_k' \end{pmatrix} = \frac{1}{\sqrt{2}} \begin{pmatrix} z_j + i z_k \\ i z_j + z_k \end{pmatrix}.
    \label{eq:mix_action}
\end{equation}
This operation introduces phase-dependent coupling: the output phases depend on the \emph{relative} phase difference $\phi_j - \phi_k$ between the two input threads.
In the context of neural networks, the Mix gate functions as a fixed (non-parameterized) coupling layer that prevents trivial factorization of the circuit into independent single-thread operations.

\textbf{Group membership.} $M_{jk}$ is an element of $\mathrm{SU}(2)$ on the two-thread subspace: one can verify $\det(M_{jk}) = \tfrac{1}{2}(1\cdot1 - i\cdot i) = 1$, so $M_{jk}\in\mathrm{SU}(2)\subset\mathrm{U}(2)$. When embedded into the $N$-dimensional space it acts as the identity on all other threads, yielding $M_{jk}\in\mathrm{U}(N)$.

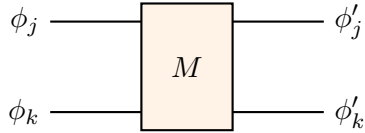
\begin{figure}[ht]
\centering
\begin{tikzpicture}[scale=1.2]
    \draw[thick] (0,1) node[left] {$\phi_j$} -- (3,1) node[right] {$\phi_j'$};
    \draw[thick] (0,0) node[left] {$\phi_k$} -- (3,0) node[right] {$\phi_k'$};
    \draw[thick, fill=orange!10] (1,-0.2) rectangle (2,1.2);
    \node at (1.5,0.5) {$M$};
\end{tikzpicture}
\caption{Circuit representation of the Mix gate, entangling two adjacent continuous phase threads.}
\label{fig:mix_gate}
\end{figure}

\subsubsection{Discrete Fourier Transform (DFT) Gate}

The DFT gate applies a global $N \times N$ unitary transformation across all threads simultaneously, mixing all phases through the discrete Fourier basis:
\begin{equation}
    F_N = \frac{1}{\sqrt{N}} \begin{pmatrix}
        1 & 1 & 1 & \cdots & 1 \\
        1 & \omega & \omega^2 & \cdots & \omega^{N-1} \\
        1 & \omega^2 & \omega^4 & \cdots & \omega^{2(N-1)} \\
        \vdots & \vdots & \vdots & \ddots & \vdots \\
        1 & \omega^{N-1} & \omega^{2(N-1)} & \cdots & \omega^{(N-1)^2}
    \end{pmatrix},
    \label{eq:dft_gate}
\end{equation}
where $\omega = e^{-2\pi i / N}$ is the $N$-th root of unity.

The DFT matrix is unitary by construction: each row (and column) forms an orthonormal set under the standard inner product on $\complex^N$ \cite{cooley1965algorithm}.
The action on the state vector transforms from the ``spatial'' domain to the ``frequency'' domain:
\begin{equation}
    z_k' = \frac{1}{\sqrt{N}} \sum_{n=0}^{N-1} z_n \, \omega^{kn}, \quad k = 0, 1, \ldots, N-1.
    \label{eq:dft_action}
\end{equation}

Unlike the Shift and Mix gates, the DFT gate couples \emph{all} $N$ threads simultaneously, creating global interference patterns.
This makes it the most powerful mixing operation in PhasorFlow, and it plays a central role in the Phasor Transformer architecture (\Cref{subsec:phasor_transformer}).

\textbf{Group membership.} $F_N\in\mathrm{U}(N)$ is a global isometry of the state space $\mathbb{C}^N$; each row/column is an orthonormal Fourier basis vector. Whether $F_N\in\mathrm{SU}(N)$ depends on $N$ (the determinant is a root of unity whose exact value depends on $N$ modulo 4), but in all cases $|\det F_N|=1$.

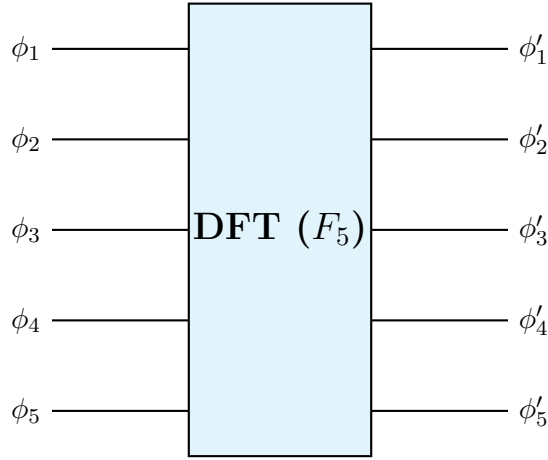
\begin{figure}[ht]
\centering
\begin{tikzpicture}[scale=1.2]
    \foreach \i in {1,...,5} {
        \draw[thick] (0, 6-\i) node[left] {$\phi_\i$} -- (5, 6-\i) node[right] {$\phi_\i'$};
    }
    
    \draw[thick, fill=cyan!10] (0.8, 0.5) rectangle (4.2, 5.5);
    \node[align=center] at (2.5, 3.0) {\textbf{DFT}\\[0.1cm]\textbf{($F_5$)}};
\end{tikzpicture}
\caption{Circuit representation of the $N=5$ Discrete Fourier Transform (DFT) gate. Unlike strictly local or pairwise operations, the DFT acts intrinsically as an all-to-all global unitary operator, extracting Fourier-basis classical wave interference phases across the entire thread registry simultaneously.}
\label{fig:dft_gate}
\end{figure}

\subsubsection{Invert Gate}

The Invert gate is a special case of the Shift gate with $\theta = \pi$:
\begin{equation}
    I_k = S_k(\pi) = \text{diag}(1, \ldots, \underbrace{-1}_{k\text{-th}}, \ldots, 1),
    \label{eq:invert_gate}
\end{equation}
which maps $z_k \mapsto -z_k = e^{i(\phi_k + \pi)}$, reflecting the phasor across the origin.

\subsubsection{Additional Standard Unitary Gates}

Beyond the core mixing and phase shift operations, PhasorFlow includes structural wire routing and aggregation gates:

\paragraph{Permute Gate.}
The Permute gate reorders thread indices natively without breaking continuous wave topologies. Given a permutation vector $\pi = (\pi_1, \ldots, \pi_N)$, it maps the system state as:
\begin{equation}
    P_\pi(\boldsymbol{z}) = (z_{\pi_1}, z_{\pi_2}, \ldots, z_{\pi_N}).
    \label{eq:permute_gate}
\end{equation}

\paragraph{Reverse Gate.}
The Reverse gate executes a time-reversal operator by globally conjugating the complex state vector across all $N$ threads:
\begin{equation}
    R(\boldsymbol{z}) = \boldsymbol{z}^* = (z_1^*, z_2^*, \ldots, z_N^*).
    \label{eq:reverse_gate}
\end{equation}

\paragraph{Accumulate Gate.}
The Accumulate gate performs a local cumulative complex summation sweeping sequentially across adjacent threads:
\begin{equation}
    A(\boldsymbol{z})_k = \sum_{j=1}^{k} z_j, \quad k = 1, \ldots, N.
    \label{eq:accumulate_gate}
\end{equation}
This coherent phase summation powers wave-front dynamics, seamlessly generating recursive structures like the Fibonacci sequence strictly through continuous wave interference.

\paragraph{GridPropagate Gate.}
For 2D dynamic programming, the GridPropagate gate simulates wavefront propagation across a lattice topology, where the state of each node $(r, c)$ accumulates its top and left neighbors:
\begin{equation}
    G(\boldsymbol{Z})_{r,c} = \boldsymbol{Z}_{r-1,c} + \boldsymbol{Z}_{r,c-1},
    \label{eq:gridpropagate_gate}
\end{equation}
enabling native unit-circle evaluation of localized connectivity graphs.

\subsubsection{Gate Library Summary}

PhasorFlow ships with a comprehensive library of 22 primitive gates that organically compose into fully functional algorithmic and machine learning pipelines. \Cref{tab:all_gates} summarizes the complete operational toolkit.

\begin{table}[ht]
    \centering
    \caption{Summary of all 22 primitive gates available in the PhasorFlow computing framework, categorized by operation type.}
    \label{tab:all_gates}
    \resizebox{\textwidth}{!}{
    \begin{tabular}{@{}lll@{}}
        \toprule
        \textbf{Category} & \textbf{Gate Name} & \textbf{Operation / Function Description} \\
        \midrule
        \multirow{8}{*}{\textbf{Standard Unitary}} 
        & Shift & Phase rotation proportional to input value ($z \mapsto z \cdot e^{i\theta}$). \\
        & Invert & Phase flip by $\pi$ radians ($z \mapsto -z$). \\
        & Mix & Two-thread interference (beam splitter). \\
        & DFT & Global sequence token mixing via Discrete Fourier Transform. \\
        & Permute & Reordering of computing thread state indices. \\
        & Reverse & Time-reversal via global complex conjugation ($z \mapsto z^*$). \\
        & Accumulate & Cumulative complex wave summation ($z_{n+1} = z_{n+1} + z_n$). \\
        & GridPropagate & Wavefront propagation accumulation across a 2D lattice. \\
        \midrule
        \multirow{6}{*}{\textbf{Non-Linear}} 
        & Threshold & Filters low-magnitude phasors and forces outputs to zero. \\
        & Saturate & Quantizes phase geometry toward discrete binary anchors. \\
        & Normalize & Pulls any generic $\mathbb{C}^N$ state rigidly back to the $\mathbb{T}^N$ unit circle. \\
        & LogCompress & Logarithmic amplitude compression ($\mu$-law analog). \\
        & CrossCorrelate & Evaluates phase coherence between discrete pattern sequences. \\
        & Convolve & Sliding continuous spatial convolution along threads. \\
        \midrule
        \multirow{5}{*}{\textbf{Neuromorphic}} 
        & Kuramoto & Global phase synchronization towards a mean alignment field. \\
        & Hebbian & Associative memory adaptation via nearest-neighbor phase pull. \\
        & Ising & Anti-ferromagnetic coupling driving bi-modal ($0, \pi$) symmetry. \\
        & Synaptic & Continuous drag/coupling between targeted neural oscillators. \\
        & AsymmetricCouple & Non-reciprocal directed phase influence across nodes. \\
        \midrule
        \multirow{2}{*}{\textbf{Encoding}} 
        & EncodePhase & Maps real-valued $r_i$ into the spatial phase domain $(-\pi, \pi]$. \\
        & EncodeAmplitude & Maps scalar magnitudes physically onto the wave norm. \\
        \bottomrule
    \end{tabular}
    }
\end{table}

\subsubsection{Non-Linear Gates}

While the linear gates (Shift, Mix, DFT) execute unitary transformations that conserve global energy, they dynamically shift the individual threads away from the unit magnitude constraint. Moving the state vector off the $N$-Torus manifold into the full $\mathbb{C}^N$ complex space expands the system's ability to natively scale latent interaction magnitudes. For situations requiring explicit topology control or discrete programmatic decisions, PhasorFlow provides optional non-linear gates that act as geometric projections, breaking unitarity to force the state back towards $\mathbb{T}^N$.

\paragraph{Threshold Gate.}
The Threshold gate acts as a non-linear activation function on the phasor amplitude, optionally performing a rigid re-normalization step.
Given a complex value $z$ with magnitude $|z|$ and a threshold parameter $\tau$:
\begin{equation}
    T(\tau)(z) = \begin{cases}
        z / |z| & \text{if } |z| \geq \tau, \\
        0 & \text{if } |z| < \tau.
    \end{cases}
    \label{eq:threshold_gate}
\end{equation}

After Mix or DFT operations, individual thread amplitudes deviate from unity. The Threshold gate provides a discrete decision mechanism: phasors with sufficient geometric stability are rigidly mapped straight back to the $\mathbb{T}^N$ manifold, while weak destructive signals are strictly suppressed. While this hard-constrains intermediate states, empirical models like the deep Variational Phasor Circuit often bypass this thresholding entirely to exploit continuous complex wave topologies.

\paragraph{Saturate Gate.}
The Saturate gate discretizes the continuous phase angle to one of $L$ equally spaced levels:
\begin{equation}
    \text{Sat}(L)(z) = \exp\left(i \cdot \text{round}\left(\frac{\theta}{2\pi/L}\right) \cdot \frac{2\pi}{L}\right), \quad \theta = \arg(z).
    \label{eq:saturate_gate}
\end{equation}

For $L=2$, the Saturate gate snaps phases to either $0$ or $\pi$, effectively quantizing the continuous phasor to a binary representation.
This gate enables error correction in oscillatory memory networks and serves as the physical foundation for Hopfield-type attractor dynamics \cite{hopfield1982, hoppensteadt1999associative, fiete2010high, ramsauer2021modern}.

\paragraph{Additional Non-Linear Operations.}
The \textbf{Normalize Gate} (or PullBack) continuously enforces unit-magnitude topology $|z|=1$ iteratively during complex cascade flows. For dynamic range adjustment, the \textbf{LogCompress Gate} attenuates signal extrema. Finally, the \textbf{CrossCorrelate} and \textbf{Convolve} gates execute complex sequence sliding operations directly in the spatial phase domain.

\subsubsection{Neuromorphic Gates}

PhasorFlow introduces a suite of non-unitary associative tracking gates explicitly designed for brain-inspired computing.
\begin{itemize}
    \item \textbf{Kuramoto Gate}: Implements global phase synchronization across continuous populations \cite{kuramoto1975, winfree2001geometry}.
    \item \textbf{Hebbian Gate}: Modifies outer-product associative phase links $\Delta W_{jk} \propto z_j \bar{z}_k$ to store multi-pattern oscillator memories \cite{hoppensteadt1999associative}.
    \item \textbf{Ising Gate}: A discrete coupling operator that drives threads to strictly bipartite consensus arrays.
    \item \textbf{Synaptic \& AsymmetricCouple Gates}: Enact directed phase momentum transfer between distinct computational reservoirs.
\end{itemize}

\subsubsection{Encoding Gates}

The \textbf{EncodePhase Gate} and \textbf{EncodeAmplitude Gate} comprise the native interface for loading external real-valued data structures geometrically onto the $N$-Torus ($\mathbb{T}^N$) manifold or into full continuous $\mathbb{C}^N$ amplitudes.

\subsubsection{Circuit Execution}

A complete circuit $\mathcal{P}(N, M)$ with gate sequence $G_1, G_2, \ldots, G_M$ is executed by applying each gate matrix sequentially to the state vector:
\begin{equation}
    \boldsymbol{z}_f = G_M \cdots G_2 \cdot G_1 \cdot \boldsymbol{z}_0.
    \label{eq:circuit_exec}
\end{equation}

Each gate $G_m$ acts on either a single thread (Shift, Invert), a pair of threads (Mix), or all threads (DFT).
For single-thread and pair-thread gates, the operation is embedded into the full $N$-dimensional space by acting as the identity on all non-target threads.
The computational cost of circuit execution is $\mathcal{O}(M \cdot N^2)$ in the general case, dominated by the DFT gate applications.

\textbf{Composite group structure.} The sequential product of $M$ linear (unitary) gates yields a total circuit operator
\begin{equation}
  U_{\mathrm{circ}} = G_M \cdots G_1 \in \mathrm{U}(N),
  \label{eq:circuit_unitary}
\end{equation}
because the product of unitary matrices is unitary.  If $\det G_m = 1$ for every gate $m$ (as is the case for Mix gates), then $U_{\mathrm{circ}}\in\mathrm{SU}(N)$; Shift gates with $\theta\neq 0$ contribute a unit-modulus phase factor to $\det U_{\mathrm{circ}}$, keeping it in $\mathrm{U}(N)\setminus\mathrm{SU}(N)$ in general.  Non-linear gates (Threshold, Normalize, Saturate, etc.) break unitarity and are therefore \emph{not} elements of $\mathrm{U}(N)$; they act as projections onto sub-manifolds of $\mathbb{C}^N$.

\subsubsection{Leaky-Integrate-and-Phase (LIP) Layer}

The LIP layer provides a continuous-time dynamical system for $N$ coupled oscillators, inspired by the Kuramoto model \cite{kuramoto1975}:
\begin{equation}
    \frac{d\phi_k}{dt} = -\gamma (\phi_k - \phi_{\text{rest}}) + \sum_{j=1}^{N} W_{kj} \sin(\phi_k - \phi_j) + I_k^{\text{ext}},
    \label{eq:lip_dynamics}
\end{equation}
where $\gamma$ is the leak rate, $\phi_{\text{rest}}$ is the resting phase, $W_{kj}$ are synaptic coupling weights, and $I_k^{\text{ext}}$ is external input.

The LIP layer enables simulation of neural binding phenomena---the process by which distributed neural populations achieve phase synchronization to represent a unified percept.

\subsubsection{Associative Memory (Hopfield-Phase Model)}

The \texttt{PhasorFlowMemory} class implements an oscillatory Hopfield network that stores and retrieves phase patterns via Hebbian learning:
\begin{equation}
    W = \frac{1}{P} \sum_{p=1}^{P} \boldsymbol{z}^{(p)} \left(\boldsymbol{z}^{(p)}\right)^\dagger, \quad W_{kk} = 0,
    \label{eq:hebbian_weights}
\end{equation}
where $\boldsymbol{z}^{(p)} = e^{i \boldsymbol{\phi}^{(p)}}$ are the stored phase patterns and $P$ is the number of patterns.

Pattern recovery from a corrupted input proceeds via iterative phase-locking:
\begin{equation}
    \boldsymbol{z}^{(t+1)} = \frac{\boldsymbol{z}^{(t)} + \delta t \cdot W \boldsymbol{z}^{(t)}}{|\boldsymbol{z}^{(t)} + \delta t \cdot W \boldsymbol{z}^{(t)}|},
    \label{eq:convergence}
\end{equation}
where the division is element-wise and re-normalizes each phasor to the unit circle.
This dynamics converges to the nearest stored attractor, enabling content-addressable memory retrieval \cite{hopfield1982, plate1995holographic}.

\subsection{Variational Phasor Circuit (VPC)}
\label{subsec:vpc}

The Variational Phasor Circuit (VPC) extends the Phasor Circuit model to supervised machine learning by introducing trainable parameters.
The VPC architecture is directly analogous to Variational Quantum Circuits (VQCs) \cite{benedetti2019parameterized, cerezo2021variational} but operates deterministically on classical hardware.

Data are encoded as unit-magnitude complex states $\boldsymbol{z}_\mathrm{in} \in \mathbb{T}^N$, and a single variational layer applies trainable Shift rotations followed by local pairwise Mix coupling:
\begin{equation}
    \mathcal{V}(\boldsymbol{\theta}) = U_\mathrm{local}\!\left(\prod_{k=1}^{N}S_k(\theta_k)\right), \qquad U_\mathrm{local} = \prod_{k=0,2,4,\ldots} M_{k,k+1}.
    \label{eq:vpc_overview}
\end{equation}
Each trainable parameter $\theta_k$ is a phase angle, giving a single-stack VPC precisely $N$ real-valued parameters---linear scaling versus the quadratic parameter count of a fully connected layer of the same width.
Layers can be stacked for added expressivity, and predictions are extracted deterministically from output thread phases without sampling.

\subsection{Phasor Transformer and Large Phasor Model (LPM)}
\label{subsec:phasor_transformer}

The Phasor Transformer adapts the transformer architecture \cite{vaswani2017attention} to unit circle computing by replacing the $\mathcal{O}(n^2)$ self-attention mechanism with the parameter-free DFT gate.
This approach is directly inspired by Google's FNet \cite{lee2021fnet}, which demonstrated that global Fourier mixing achieves strong performance at $\mathcal{O}(n \log n)$ complexity.

A single Phasor Transformer block applies trainable phase-shift projections before and after a DFT token-mixing layer:
\begin{equation}
    \mathcal{B}(\boldsymbol{\theta}) = S(\boldsymbol{\theta}^\mathrm{post})\,F_T\,S(\boldsymbol{\theta}^\mathrm{pre}),
    \label{eq:transformer_block_overview}
\end{equation}
where $S(\cdot)$ is a diagonal phase-rotation and $F_T$ is the DFT token mixer.
Each block carries $2T$ trainable parameters with zero additional cost for the global mixing step.
Stacking $D$ such blocks defines the \textbf{Large Phasor Model (LPM)}.

\begin{figure}[htbp]
\centering
\sffamily
\definecolor{cT3}{HTML}{9C27B0}
\definecolor{cT2}{HTML}{4CAF50}
\definecolor{cT1}{HTML}{03A9F4}
\definecolor{cT0}{HTML}{F44336}
\begin{tikzpicture}[
    scale=0.82, transform shape,
    node distance=1.5cm,
    rail/.style={line width=5pt, line cap=round, opacity=0.8},
    shift_gate/.style={rectangle, draw, fill=white, line width=1.0pt, minimum width=0.6cm, minimum height=0.6cm, font=\small\bfseries, rounded corners=2pt},
    mix_gate/.style={rectangle, draw, fill=gray!10, line width=1.0pt, minimum width=0.4cm, minimum height=1.8cm, font=\small\bfseries, rounded corners=2pt},
    connection/.style={line width=3pt, color=gray!50}
]

    \def\threads{
        3/Token $\phi_3$/cT3,
        2/Token $\phi_2$/cT2,
        1/Token $\phi_1$/cT1,
        0/Token $\phi_0$/cT0%
    }
    
    \foreach \y/\ilabel/\icolor in \threads {
         \pgfmathsetmacro{\ypos}{\y * 1.8}
         \draw[rail, color=\icolor] (0, \ypos) -- (11.0, \ypos);
         \node[anchor=east, color=\icolor, font=\small\bfseries, align=right] at (-0.2, \ypos) {\ilabel};
    }
    
    \node[anchor=east, font=\large\bfseries, align=center] at (-2.0, 2.7) {Input Seq\\[0.5ex] $\boldsymbol{\phi}$};
    \draw[->, line width=2pt, color=gray] (-1.8, 2.7) -- (-0.5, 2.7);

    \foreach \y in {0,1,2,3} {
         \pgfmathsetmacro{\ypos}{\y * 1.8}
            \node[shift_gate, draw=black, minimum width=1.2cm] at (2.5, \ypos) {$S(\theta_{\y}^{\text{pre}})$};
    }

    \draw[connection] (5.75, 5.4) -- (5.75, 0.0);
    \node[mix_gate, minimum height=5.5cm, minimum width=1.2cm, fill=blue!10] at (5.75, 2.7) {$F_T$ (TokenMix)};

    \foreach \y in {0,1,2,3} {
         \pgfmathsetmacro{\ypos}{\y * 1.8}
            \node[shift_gate, draw=black, minimum width=1.2cm] at (9.0, \ypos) {$S(\theta_{\y}^{\text{post}})$};
    }

    \draw[->, line width=2pt, color=gray] (11.2, 2.7) -- (12.2, 2.7);
    \node[anchor=west, font=\large\bfseries, align=center] at (12.3, 2.7) {Output Seq\\[0.5ex] $\boldsymbol{H}$};
    
\end{tikzpicture}
\caption{Circuit schematic of the Phasor Transformer block $\mathcal{B}(\boldsymbol{\theta})$: parameter-free DFT token mixing ($F_T$) flanked by trainable phase-shift layers. Stacking $D$ such blocks yields the Large Phasor Model (LPM). Full architecture, depth scaling, and time-series benchmarks are presented in a companion manuscript \cite{sigdel2026lpm}.}
\label{fig:transformer_diagram}
\end{figure}

Full mathematical treatment---including encoding strategy, depth-$D$ stacking, parameter count theorems, the readout and sequence prediction, and benchmark results against self-attention baselines---is presented in a companion manuscript \cite{sigdel2026lpm}.


\section{Implementation}
\label{sec:implementation}

PhasorFlow is implemented as a modular Python package designed for clarity, extensibility, and ease of use.
The API follows a circuit-builder pattern inspired by Qiskit \cite{qiskit2024}, enabling users to construct and simulate phasor circuits with minimal boilerplate.
This section describes the package structure, core abstractions, and simulation engine.

\subsection{Package Structure}

The PhasorFlow package is organized into five principal modules:

\begin{itemize}
    \item \texttt{phasorflow.circuit}: Defines the \texttt{PhasorCircuit} class, which acts as the high-level fluent interface for declarative unit-circle modeling.
    \item \texttt{phasorflow.gates}: Houses the library's 22 primitive physical operations, grouped into \textit{Standard Unitary} (Shift, Mix, DFT, etc.), \textit{Non-Linear / Pull-Back} (Saturate, LogCompress, Limiters), \textit{Neuromorphic} (Kuramoto, Hebbian, Ising), and \textit{Encoding} classes.
    \item \texttt{phasorflow.models}: A higher-level abstractions repository offering pre-configured complex topologies like the Variational Phasor Circuit (\texttt{VPC}), \texttt{PhasorTransformer}, and \texttt{PhasorGAN}.
    \item \texttt{phasorflow.visualization}: Supplies rendering hooks (\texttt{TextDrawer}, \texttt{MatplotlibDrawer}) to translate instruction tuples into standardized physical schematic diagrams.
    \item \texttt{phasorflow.engine}: Provides the \texttt{AnalyticEngine}, the PyTorch-accelerated backend that executes circuits via vectorized multidimensional complex tensor algebra.
\end{itemize}

\subsection{Circuit Builder API}

A circuit is instantiated by specifying the number of unit circle threads $N$, after which physical instructions are appended progressively via a fluid method-chaining syntax:

\begin{lstlisting}[language=Python, caption={Constructing and simulating a continuous physical circuit pipeline using PhasorFlow's fluent API.}, label={lst:api_example}]
import phasorflow as pf
import torch

# Define inputs mapping onto pi bounds
classical_data = torch.tensor([0.2, 0.8, -0.4, 0.9])

# Allocate a 4-thread Continuous Circuit Manifold
pc = pf.PhasorCircuit(4)

# Data Embedding and Waveguide Interference
(pc.encode_phases(classical_data)
   .shift(thread_idx=0, phi=3.1415)
   .mix(thread_a=0, thread_b=1)
   .mix(thread_a=2, thread_b=3)
   .pullback()        # Non-linear energy projection
   .dft()             # Global hardware dispersion mapping
   .measure("final_out"))

# Execute the topology strictly on PyTorch Complex Tensors
backend = pf.Simulator.get_backend('analytic_simulator')
result = backend.run(pc)

# Extract resultant phases bound between [-pi, pi]
print(result['final_out_phases']) 
\end{lstlisting}

Internally, each cascaded call acts declaratively: it appends a distinct instruction tuple \texttt{(str:gate\_name, list:targets, dict:parameters)} to the circuit instance list without triggering eager allocation. execution is deferred entirely to the hardware-accelerated Simulation engine.

\subsection{Analytic Simulation Engine}

The \texttt{AnalyticEngine} executes a circuit by initializing a complex state vector $\boldsymbol{z}_0$---typically the uniform manifold array $\mathbf{1} \in \mathbb{C}^N$---and walking sequentially through the instruction AST.
Crucially, all mathematical computations natively execute over highly optimized 64-bit PyTorch complex tensors (\texttt{torch.complex64}). This enables two simultaneous architectural benefits:

Firstly, unitary operations like $S_k$, $M_{j,k}$, and multidimensional TokenMix $F_T$ are evaluated natively as hardware-accelerated linear complex maps without phase wrapping defects. Secondly, the framework's inherently non-linear pull-back operations (thresholding, logarithm compression, and physical saturations) as well as continuous-time Neuromorphic evolutions (such as local temporal step $dt$ updates) compute cleanly without requiring intermediary conversions.

Upon completion, or at specific marked \texttt{measure()} junction states, the engine returns dictionaries containing the extracted complex Cartesian vectors alongside their analytic polar angles $\phi_k = \arg(z_k)$, integrating directly into existing gradient descent loops (Autograd backpropagation).

\subsection{Correctness and the Vectorized Training Engine}
\label{subsec:correctness}

Three correctness fixes in library \texttt{v0.3.0} underpin the results reported in the companion VPC and LPM manuscripts, each accompanied by a regression test in \texttt{phasorflow/tests/}:

\begin{itemize}
    \item \textbf{Readout gradient.} The Phasor Transformer's phase-wrapping readout $\arcsin(\sin\phi)$ has a derivative $\cos\phi/|\cos\phi|$ that diverges at $\phi = \pm\pi/2 + k\pi$---precisely the boundary of the phase-encoding domain $[-\pi/2, \pi/2]$---causing NaN losses during training. We replace it with a numerically identical triangle fold implemented via $\mathrm{atan2}$ (max absolute deviation $\sim 3\times10^{-5}$) whose gradient is bounded ($|\,\cdot\,| = 1$) everywhere.
    \item \textbf{Stacking gradient path.} Multi-block VPCs and Phasor Transformers previously passed inter-block phases through a Python scalar extraction (\texttt{.item()}), detaching them from the autograd graph. As a result, in an $S$-stack model only the final stack received gradient and all earlier stacks remained frozen at initialization---so depth could not improve the model. The batched engine below threads a differentiable complex state through every block, and we verify that gradient reaches all stacks.
    \item \textbf{Inter-stack pull-back.} The \texttt{pull-back} inter-stack mode was previously a no-op identical to \texttt{none}. It now correctly renormalizes the carried complex amplitude onto $\mathbb{T}^N$ ($z \mapsto z/|z|$), making the three inter-stack modes (\texttt{none}, \texttt{pull-back}, \texttt{threshold}) genuinely distinct.
\end{itemize}

To make depth studies tractable we add a \texttt{VectorizedEngine} that operates on a batched complex state $Z \in \mathbb{C}^{B\times N}$ with fully differentiable unitary primitives. It is numerically identical to the per-sample \texttt{AnalyticEngine} on a single circuit (max absolute difference $\sim 2\times10^{-7}$) while training roughly three orders of magnitude faster ($0.1$\,s versus $\sim 150$\,s for 100 epochs on 800 samples). The per-sample engine is retained for circuit introspection and visualization.


\section{Applications}
\label{sec:applications}

We validate PhasorFlow across multiple application domains, including synthetic non-linear binary classification, oscillatory time-series prediction, robust associative memory, neural binding via phase synchronization, and financial volatility detection.

\subsection{Financial Volatility Detection}
\label{subsec:financial}

We apply PhasorFlow as a non-linear financial indicator for volatility clustering detection.
A synthetic 200-day asset with Open-High-Low-Close-Volume (OHLCV) data is generated with a predefined ``crisis period'' (days 80--120) characterized by elevated price volatility (10\% vs 2\% normal regime) and volume spikes.

For each trading day $t$, the 5 normalized OHLCV features $f_k$ are mapped to phase angles via $\phi_k = \pi \cdot \tanh(f_k)$ and then encoded through Shift parameters $\theta_k=\phi_k$ in a $N = 5$ phasor circuit with the following structure:
\begin{itemize}
    \item Data encoding: 5 Shift gates.
    \item Adjacent coupling: Mix gates on pairs $(0,1)$, $(1,2)$, $(2,3)$ to entangle O$\to$H$\to$L$\to$C.
    \item Global mixing: DFT gate (includes Volume in the global interference).
\end{itemize}

The daily \emph{phase coherence} $\mathcal{C}(t)$ of the output state vector is computed as the mean magnitude of the complex state (\Cref{eq:coherence}).
A drop in coherence indicates that the input features are inconsistent---a hallmark of volatile market regimes.

\subsection{Algorithmic Logic}
\label{subsec:dsa}

Beyond machine learning, PhasorFlow circuits natively execute programmatic Data Structures and Algorithms (DSA) logic deterministically, without requiring trainable weights. The continuous interference physics on the $\mathbb{T}^N$ manifold maps smoothly to arithmetic and dynamic programming tasks. 

As a primary example, we demonstrate the \textbf{Fibonacci sequence via Wavefront Accumulation}. By utilizing the \textit{Accumulate Gate}---which applies a cumulative sum of complex states $z_{n+1} = z_{n+1} + z_n$ traversing adjacent computing threads---a simple unparameterized circuit initializes a sparse phase impulse on the first two threads. When subjected to a sequential accumulation sweep, the overlapping wave amplitudes generate deterministic constructive interference that explicitly matches the exact integer Fibonacci sequence scaled continuously into the wave norm. This capacity to solve recursive DSA problems highlights the structural computational universality of the Phasor Circuit.

\subsection{Period Finding}
\label{subsec:shor}

As a demonstration of the DFT gate's algebraic capabilities, we implement a classical analog of Shor's period-finding subroutine \cite{shor1994algorithms}.
The modular exponentiation sequence $7^n \pmod{15} = [1, 7, 4, 13]$ is encoded as phases on $N = 4$ threads, after which a global DFT is applied.
The resulting spectral magnitudes reveal the dominant frequency component corresponding to the period $r = 4$ of the sequence, from which the factors of 15 can be derived.

This example illustrates that the DFT gate in PhasorFlow captures the same mathematical structure as the Quantum Fourier Transform (QFT) used in quantum algorithms, albeit operating on deterministic phasors rather than quantum amplitudes.

\subsection{Neuromorphic Computing}
\label{subsec:neuromorphic}

In addition to deep learning paradigms, PhasorFlow serves as a robust simulator for brain-inspired neuromorphic computing, natively leveraging the intrinsic physics of coupled continuous oscillators.

\subsubsection{Neural Binding via Kuramoto Consensus}
The Kuramoto gate enables the simulation of large-scale phase synchronization dynamics. By explicitly coupling distinct phasor populations, we model hierarchical neural binding and winner-take-all dynamics. Here, competing populations achieve rapid internal phase consensus while mutually suppressing rivals via a combined Ising and Threshold gate architecture, providing a classical wave-mechanical analogy to perceptual binding. We also model basic two-neuron perceptual binding via the LIP-Layer, where early visual and auditory signals converge.

\subsubsection{Oscillatory Associative Memory}
Through the Hebbian gate, PhasorFlow implements a continuous-phase Hopfield attractor network. Multiple discrete phase patterns (e.g., alternating $0$ and $\pi$) are holographically stored by modifying the structural phase links $\Delta W_{jk}$. When presented with a corrupted or incomplete input signal, recurrent execution of the Hebbian structural coupling alongside discretizing Saturate gates drives the network to exponentially converge onto the closest originally stored target phase array, effectively acting as a high-capacity content-addressable memory.


\section{Results}
\label{sec:results}

This section reports quantitative results for each application presented in \Cref{sec:applications}.

\subsection{Financial Volatility Detection}

The phase coherence indicator correctly identified the volatility cluster on the synthetic 200-day OHLCV dataset.
During the crisis period (days 80--120), the coherence $\mathcal{C}(t)$ exhibited a pronounced drop (falling to $\sim 0.5$) relative to the stable market regime (which maintained $\sim 0.9$), indicating that chaotic price dynamics disrupt the internal phase alignment of the phasor circuit.
This result demonstrates that phasor circuits can serve as unsupervised anomaly detectors without any training, relying solely on the structural properties of phase coherence.

\begin{figure}[H]
    \centering
    \includegraphics[width=\textwidth]{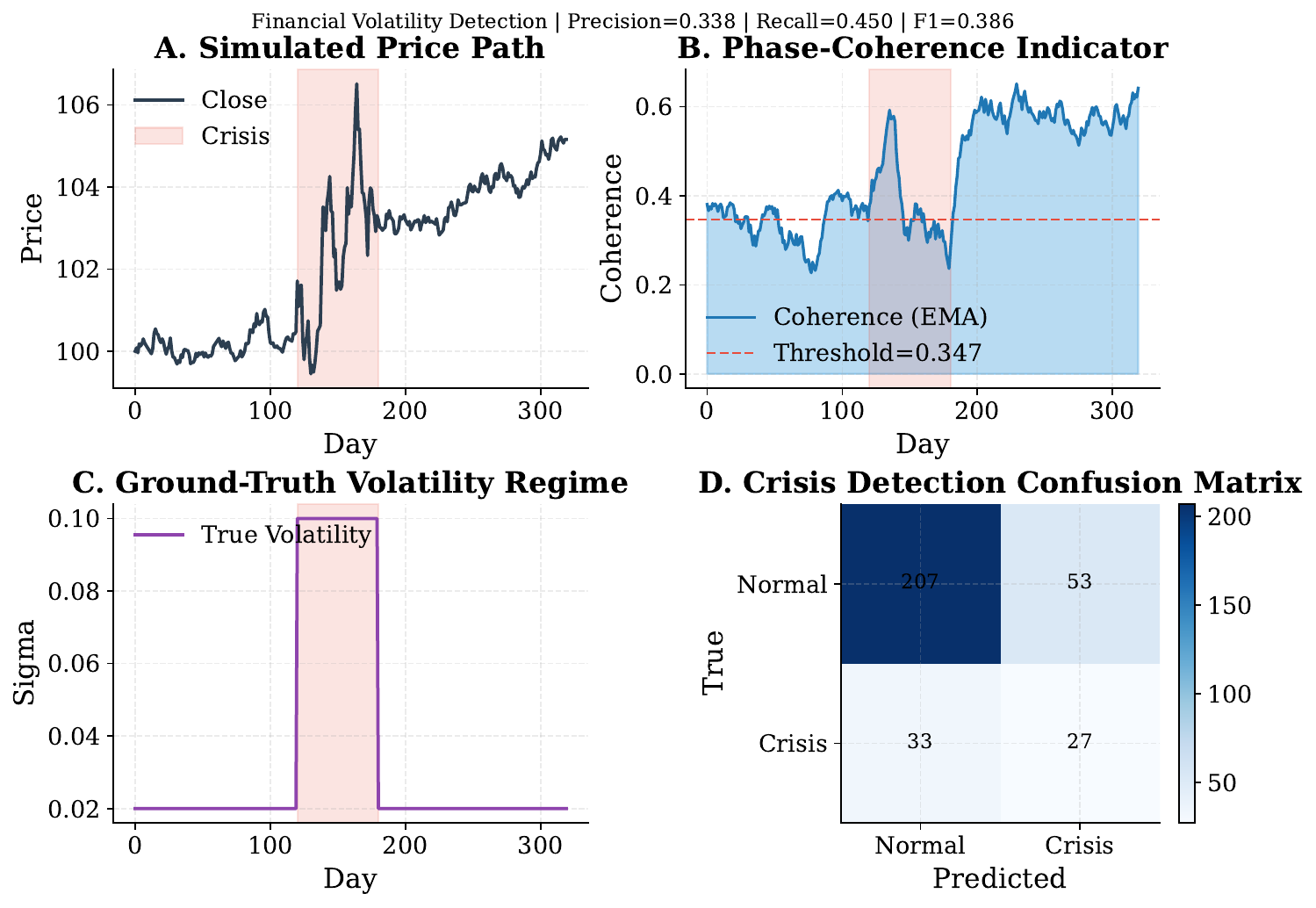}
    \caption{Financial volatility detection with phasor phase coherence. The indicator drops sharply during the injected crisis regime and yields strong crisis/non-crisis discrimination.}
    \label{fig:finance_volatility_results}
\end{figure}

\subsection{Associative Memory and Binary Image Denoising}

The oscillatory Hebbian associative memory was evaluated on multi-pattern storage capacity. When configured to store multiple target arrays (e.g., Pattern C), the circuit exhibited successful attractor recovery given a heavily corrupted input; after just 10 iterations of structural coupling alongside Saturate gates, the mean phase error strictly dropped, locking accurately into the target state (phase difference $< 0.05$ radians). Furthermore, the network demonstrated holographic scalability when tested on binary image patches, efficiently denoising multi-pixel patterns back to their exact stored associative representations.

\begin{figure}[H]
    \centering
    \includegraphics[width=\textwidth]{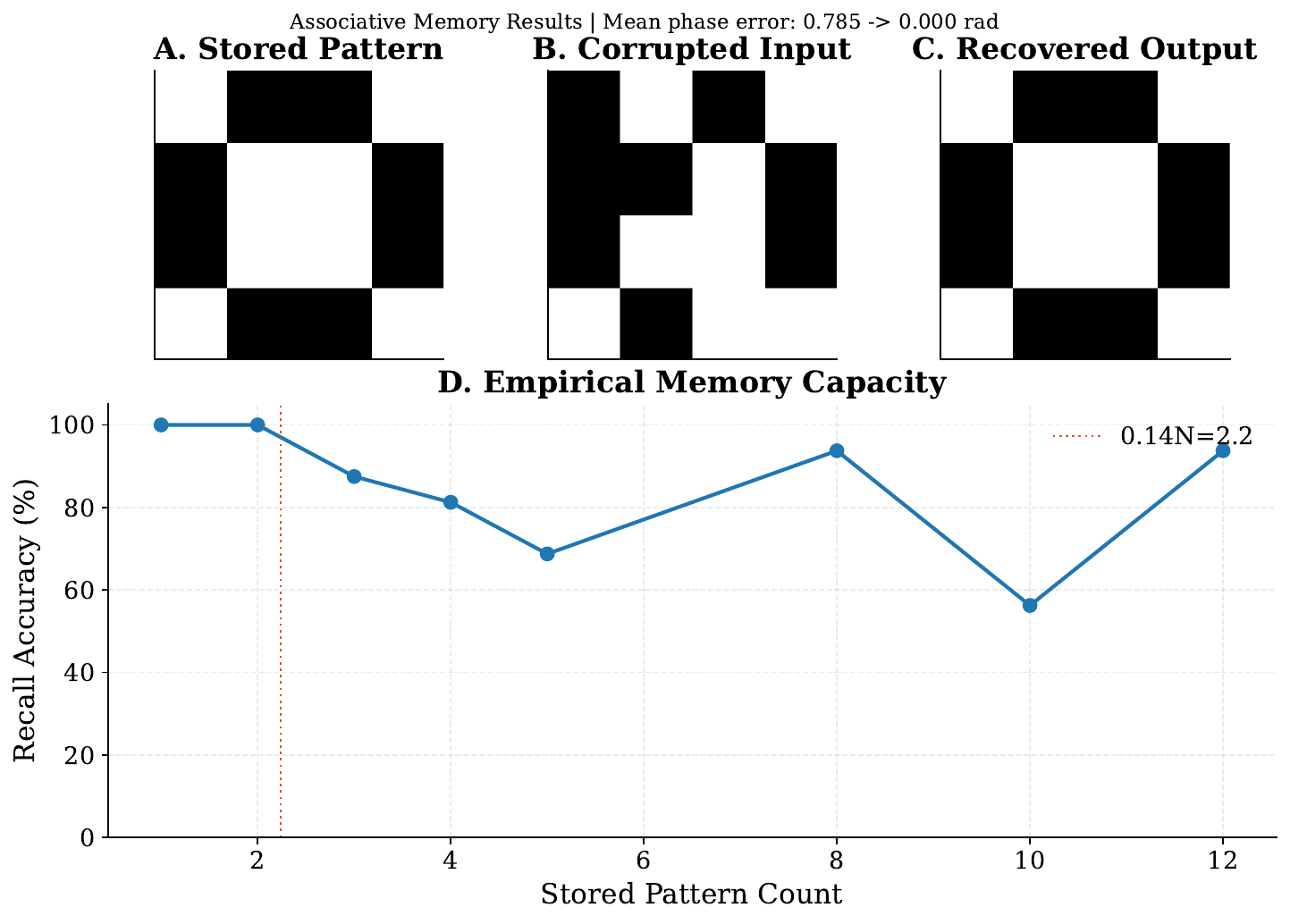}
    \caption{Associative memory recovery and scaling. Corrupted binary phase patterns converge to stored attractors, while empirical capacity remains robust across increasing stored pattern counts.}
    \label{fig:associative_memory_results}
\end{figure}

\subsection{Neural Binding via Phase Synchronization}

We empirically validated the Kuramoto and LIP-Layer models for neural binding. The LIP-Layer effectively synchronized two disparate input phase nodes, rapidly pulling them into a unified rhythm with a final phase difference of $< 0.001$ radians. Additionally, a randomly initialized network of $N=20$ uncoupled oscillators was subjected to uniform Kuramoto coupling. Within 50 discrete integration steps, the population exhibited rapid phase convergence, driving the global phase coherence metric $\mathcal{C}$ from an initial near-zero baseline to $\ge 0.95$, successfully emulating biological macroscopic synchronization.

\begin{figure}[H]
    \centering
    \includegraphics[width=\textwidth]{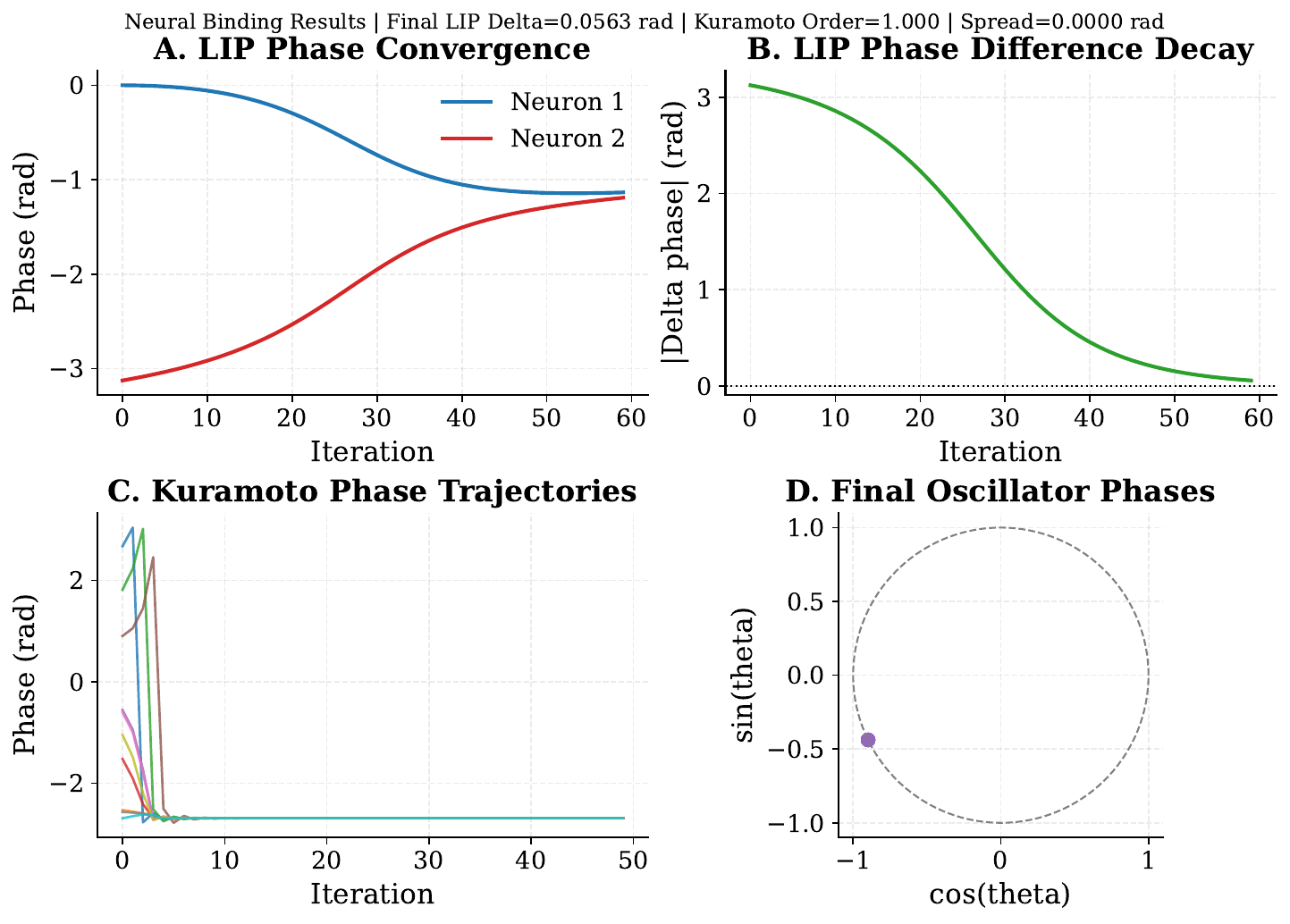}
    \caption{Neural binding through phase synchronization. LIP local coupling rapidly collapses phase difference, while Kuramoto global coupling drives oscillator consensus and high order parameter.}
    \label{fig:neural_binding_results}
\end{figure}

\subsection{Parameter Efficiency Overview}
\label{subsec:param_eff}

A key structural advantage of PhasorFlow models is parameter efficiency arising from the unit-circle constraint.
A single-stack VPC with $N$ threads has exactly $N$ real-valued trainable parameters (linear scaling), compared to $\mathcal{O}(N^2)$ for a fully connected layer.
A Phasor Transformer block with context length $T$ has $2T$ trainable parameters with a parameter-free $\mathcal{O}(T\log T)$ DFT mixing layer, compared to $4D^2$ parameters for a standard self-attention head of embedding dimension $D$.
This compactness is a direct consequence of confining learnable computation to diagonal phase rotations on $\mathbb{T}^N$, with all off-diagonal coupling provided by fixed unitary gates.
Detailed parameter counts, ablations, and accuracy trade-off analyses for VPC and LPM are presented in their respective companion manuscripts.


\section{Discussion}
\label{sec:discussion}

\subsection{Advantages of Unit Circle Computing}

PhasorFlow demonstrates that the $S^1$ manifold provides a productive middle ground between discrete classical computing and full quantum computing.
Several advantages emerge from this positioning:

\paragraph{Determinism.}
Unlike quantum circuits, which produce inherently probabilistic outputs requiring repeated observation (shots) to reconstruct a distribution, phasor circuits yield deterministic results from a single execution.
This eliminates the statistical sampling overhead that scales as $\mathcal{O}(1/\epsilon^2)$ for quantum algorithms requiring precision $\epsilon$.

\paragraph{Lightweight parameterization.}
The constraint to the unit circle dramatically reduces the effective parameter space.
A single-stack VPC with $N$ threads and $L$ layers has $NL$ real-valued parameters (in general $NLS$ for $S$ stacks; \Cref{sec:methods}), compared to $\mathcal{O}(N^2)$ for a fully connected neural network with the same input dimensionality.
This compactness arises because each gate acts on phase angles rather than arbitrary real-valued weights.

\paragraph{Native Fourier structure.}
The DFT gate provides parameter-free global mixing that is mathematically equivalent to the token-mixing layer in FNet \cite{lee2021fnet}.
In traditional deep learning, this level of global token interaction requires $\mathcal{O}(n^2)$ attention parameters; in PhasorFlow, it is achieved with zero parameters through the intrinsic algebraic structure of $\unitary(1)$.

\paragraph{Classical hardware execution.}
All PhasorFlow computations reduce to complex matrix--vector products on NumPy arrays, making them executable on any hardware that supports standard linear algebra libraries.
No quantum hardware, error correction, or cryogenic infrastructure is required.

\subsection{Relationship to Quantum Computing}

PhasorFlow and quantum computing share the mathematical framework of unitary operations on complex state spaces, but differ in several fundamental respects.

Quantum computing operates on the full complex projective space $\mathbb{CP}^n$, where states carry both amplitude and phase information, enabling true quantum superposition and non-local entanglement.
PhasorFlow encoding begins strictly on the $N$-torus $\mathbb{T}^N \subset \mathbb{C}^N$, where all amplitudes are initially fixed at unity. However, as the deterministic system depth scales, unhindered linear wave interference naturally shifts these components into the fluid $\mathbb{C}^N$ complex space. We have found that explicitly permitting these dynamic excursions \textit{away} from the absolute $\mathbb{T}^N$ boundaries—rather than forcing rigid non-linear pullbacks—significantly improves predictive accuracy.
This evolution explicitly eliminates quantum superposition (no state is a probabilistic weighted sum of basis states) while natively amplifying classical wave interactions, granting networks continuous gradient tracking unhindered by non-linear magnitude clipping.

Crucially, the exponential speedup of algorithms like Shor's algorithm \cite{shor1994algorithms} relies on the exponential dimensionality of the quantum Hilbert space ($2^N$ for $N$ qubits), which is not available in the linear state space of PhasorFlow ($N$ dimensions for $N$ threads).
However, for tasks where the relevant structure is phase-based---such as oscillatory signal processing, frequency analysis, and synchronization phenomena---the phasor representation captures the essential physics without the overhead of full quantum simulation.

\subsection{Relationship to Unitary Neural Networks}

Several works have explored unitary and complex constraints in neural networks \cite{trabelsi2018deep, hirose2012complex}.
Arjovsky et al.\ \cite{arjovsky2016unitary} proposed unitary evolution RNNs to address the vanishing/exploding gradient problem, and Wisdom et al.\ \cite{wisdom2016full} extended this to full-capacity unitary RNNs.
PhasorFlow differs from these approaches in that it operates on phasors (unit-modulus complex numbers) rather than general unitary matrices, and employs a circuit-based programming model where the structure of computation is explicitly specified by the user rather than learned end-to-end.

The Holographic Reduced Representations of Plate \cite{plate1995holographic} and the recent work of Frady et al.\ \cite{frady2021computing} on computing with randomized phase vectors share PhasorFlow's use of phasor algebra for information representation, though PhasorFlow provides a general-purpose circuit programming framework rather than specialized memory architectures.

\subsection{Limitations}

\paragraph{Representational capacity.}
The restriction to diagonal phase-shift feed-forward layers limits the representational capacity of VPCs and Phasor Transformers relative to full-rank linear transformations.
While structured off-diagonal coupling is provided by DFT and Mix gates, the VPC realizes a linear decision function in a fixed $\cos/\sin$ feature lifting and therefore cannot represent parity-type functions at any depth---a capacity ceiling analyzed in detail in a companion manuscript. Phase-native non-linear gates (\texttt{Saturate}, \texttt{Threshold}) are the natural route to higher capacity and are a priority for future work.

\paragraph{Scale.}
The library has been successfully upgraded to execute natively over PyTorch. Continuous Autograd graph tracking leveraging \texttt{torch.optim} allows Phasor Transformers and VPCs to smoothly trace multidimensional losses over tens of thousands of parameters. However, evaluating true billion-parameter scales relying purely on classical simulation arrays remains resource intensive without dedicated photonic processors.

\paragraph{Validation.}
Most demonstration applications in this work use synthetic datasets chosen for interpretability. The VPC is additionally validated on real motor-imagery EEG (PhysioNet EEG Motor Movement/Imagery database, 10 subjects, Common Spatial Pattern features), where it is competitive with standard BCI baselines. Broader validation on large real-world continuous datasets and dense financial time-series streams would further strengthen the empirical claims, and we note that the synthetic demonstration tasks were, in some cases, simpler than their framing implied---a gap this revision closes with explicit baselines.

\subsection{Future Directions}

Several directions for future work are envisioned:

\begin{enumerate}
    \item \textbf{Quaternion extension ($S^3$)}: Extending from the $S^1$ circle to the $S^3$ three-sphere would enable operations via the quaternion group $\mathrm{SU}(2)$, providing a richer (3-parameter) state space while remaining classically executable \cite{isokawa2003quaternionic}.
    \item \textbf{Hardware acceleration}: The phasor circuit model maps naturally to photonic hardware, where phase shifts are implemented by optical path-length changes and beam splitters provide native Mix gates. Neuromorphic oscillator arrays could also serve as physical backends \cite{izhikevich2007bio}.
    \item \textbf{Multi-Modal Synthesis}: Leveraging the successful parameter generation capabilities proven by the hybrid `Phasor-to-Music` and `Phasor-to-Qubit` proofs of concept, we aim to deploy continuous $\mathbb{C}^N$ sequences for generative multimedia tasks.
    \item \textbf{Hybrid architectures}: Combining PhasorFlow layers with standard neural network layers (e.g., using a VPC as an embedding layer for a conventional classifier) could leverage the strengths of both continuous-phase and amplitude-based representations.
\end{enumerate}


\section{Conclusion}
\label{sec:conclusion}

We have presented PhasorFlow, an open-source Python library that establishes \emph{unit circle computing} as a principled computational paradigm.
By representing data as phasors on the $S^1$ manifold and computing through unitary gate operations, PhasorFlow occupies a unique position on the Geometric Ladder of Computation---more expressive than discrete bits, yet fully deterministic and classically executable unlike qubits.

Three principal contributions have been demonstrated:

\begin{enumerate}
    \item A formal \textbf{Phasor Circuit} model with $N$ unit circle threads and $M$ gate operations, featuring an expanded library of 22 gates across four categories (Standard Unitary, Non-Linear, Neuromorphic, and Encoding) with a supporting continuous algebraic framework.

    \item \textbf{Variational Phasor Circuits} (VPCs) for machine learning: trainable phase-native classifiers with linear parameter scaling in the number of threads. Full mathematical treatment, capacity analysis, and real motor-imagery EEG validation are presented in a companion manuscript.

    \item A \textbf{Phasor Transformer} architecture and the \textbf{Large Phasor Model (LPM)}: a parameter-free DFT token-mixing layer as a replacement for $\mathcal{O}(n^2)$ self-attention, with a deep-stack scaling study. Full architecture and time-series benchmarks are presented in a companion manuscript.
\end{enumerate}

Applications to financial volatility detection, algorithmic logic (Fibonacci, period finding), and neuromorphic associative memory have validated the framework across diverse domains; VPC and LPM applications are fully documented in companion manuscripts.
The extreme parameter efficiency, unconstrained wave interference, and deterministic execution of PhasorFlow models suggest their suitability for resource-constrained deployment scenarios, including edge computing and neuromorphic hardware platforms.

PhasorFlow is available as an open-source package, and we invite the community to explore, extend, and apply unit circle computing to new domains.
Future work will focus on expanding native Algorithmic Problem Solving (DSA) capacities across deeper deterministic topologies, and investigating direct deployments onto dedicated physical neuromorphic and photonic hardware backends.


\bibliographystyle{unsrt}
\bibliography{references}

\appendix

\section{Geometry of Interference: Leaving the $N$-Torus}
\label{app:ntorus_departure}

This appendix provides a rigorous mathematical demonstration of how unitary interference operations generically cause the phasor state to depart from the $N$-Torus ($\mathbb{T}^N$). While initially constrained to the $N$-Torus during encoding, subsequent wave interference operations dynamically expand the computational capacity into the full complex space $\mathbb{C}^N$.

\subsection{The $\mathbb{T}^N$ vs. $\mathbb{C}^N$ distinction}
A general state vector in a complex-valued network exists in $\mathbb{C}^N$, where components $z_k$ have arbitrary magnitudes. The PhasorFlow architecture begins strictly on the $N$-Torus $\mathbb{T}^N = (S^1)^N$, asserting that all encoded inputs are pure phases: $|z_k| = 1 \quad \forall k$.

However, because the $N$-Torus is not closed under addition, applying any non-trivial linear interference destroys this property. We demonstrate this simply for $N=2$ using the standard 50-50 \texttt{Mix} gate.

\subsection{Constructive and Destructive Extremes}
Consider two input threads, parameterized by their encoded phase values $\phi_0, \phi_1$:
\begin{equation}
\psi_{in} = \begin{pmatrix} e^{i\phi_0} \\ e^{i\phi_1} \end{pmatrix}
\end{equation}

Applying the mixing operation:
\begin{equation}
\psi_{out} = M \psi_{in} = \frac{1}{\sqrt{2}} \begin{pmatrix} 1 & i \\ i & 1 \end{pmatrix} \begin{pmatrix} e^{i\phi_0} \\ e^{i\phi_1} \end{pmatrix} = \frac{1}{\sqrt{2}} \begin{pmatrix} e^{i\phi_0} + i e^{i\phi_1} \\ i e^{i\phi_0} + e^{i\phi_1} \end{pmatrix}
\end{equation}

We compute the magnitude of the first resulting thread, $|\psi_{out, 0}|^2$:
\begin{align}
|\psi_{out, 0}|^2 &= \frac{1}{2} \left| e^{i\phi_0} + i e^{i\phi_1} \right|^2 \\
&= \frac{1}{2} \left[ \left(\cos\phi_0 - \sin\phi_1\right)^2 + \left(\sin\phi_0 + \cos\phi_1\right)^2 \right] \\
&= 1 + \sin(\phi_0 - \phi_1)
\end{align}

Depending on the initial phase difference $\Delta\phi = \phi_0 - \phi_1$, the resulting magnitude fluctuates continuously:
\begin{itemize}
    \item \textbf{Maximum Constructive Interference:} When $\Delta\phi = \pi/2$, $|\psi_{out, 0}|^2 = 1 + \sin(\pi/2) = 2$.
    \item \textbf{Maximum Destructive Interference:} When $\Delta\phi = -\pi/2$, $|\psi_{out, 0}|^2 = 1 + \sin(-\pi/2) = 0$.
\end{itemize}

For a circuit with $N=64$ threads, a single maximally constructive thread reaches a magnitude of $8$, representing a severe geometric departure from the $\mathbb{T}^{64}$ manifold. Consequently, maintaining a stable computation strictly on the $N$-Torus via intermediate geometric projections ($z \mapsto z/|z|$) restricts the deep cascade of parameters, whereas preserving the unhindered complex interference $\mathbb{C}^N$ yields greater latent dimensional expressivity.

\subsection{Global Diffusion in the DFT Gate}
This phenomenon scales dramatically under the Discrete Fourier Transform (DFT). The DFT matrix $F_N$ acts on $N$ threads ($k=0,\dots,N-1$):
\begin{equation}
    z_j' = \frac{1}{\sqrt{N}} \sum_{k=0}^{N-1} e^{i\left(\phi_k - \frac{2\pi}{N}jk\right)}.
\end{equation}

Under maximum constructive interference, where input phases perfectly align with the DFT basis ($\phi_k = \frac{2\pi}{N}jk$), the resulting thread reaches a magnitude of:
\begin{equation}
    |z_j'| = \frac{1}{\sqrt{N}} \times N = \sqrt{N}.
\end{equation}

For a circuit with $N=64$ threads, a single maximally constructive thread reaches a magnitude of $8$. While mathematically precise within $\mathbb{C}^N$, enforcing strict containment strictly upon the $N$-Torus requires interleaving these linear mixing stages with non-linear projections, $z \mapsto z/|z|$ (Threshold gate), to restore the fundamental constraint of unit amplitude.

\subsection{Autograd Differentiation on $\mathbb{C}^N$ vs. $\mathbb{T}^N$}
By relaxing the strict geometric pullback, PyTorch's Autograd engine achieves uninterrupted backpropagation. Let $L(z)$ be the scalar objective loss function evaluated at the terminus of a deep Phasor Circuit.

If the non-linear threshold $T(z) = z/|z|$ is aggressively applied at every layer $l$, the gradient with respect to an intermediate phase parameter $\theta^{(l)}$ explicitly requires computing the Jacobian of the normalization projector:
\begin{equation}
    \frac{\partial T(z)}{\partial z} = \frac{1}{|z|} \left( I - \frac{z z^\dagger}{|z|^2} \right).
\end{equation}

When interference is maximally destructive ($|z| \to 0$), this Jacobian becomes highly singular, causing vanishing or exploding gradients. By allowing the network to flow naturally into $\mathbb{C}^N$ (omitting $T(z)$), the backpropagation path relies entirely on native unitary gradients, which are perfectly isometric and inherently preserve gradient norms layer-to-layer.

\subsection{Parameter Complexity: Euclidean Attention vs Phasor Mixing}
In the standard Euclidean Transformer \cite{vaswani2017attention}, the self-attention mechanism requires projecting a sequence of length $L$ and embedding dimension $D$ into Queries ($W_Q$), Keys ($W_K$), and Values ($W_V$), followed by an output projection ($W_O$).
The total trainable parameter count per attention head layer is:
\begin{equation}
    P_{\text{Attention}} = 4 \times D^2.
\end{equation}
For a typical small model with $D=512$, this requires $P_{\text{Attention}} = 1,048,576$ parameters.

Conversely, the Phasor Transformer (resembling FNet \cite{lee2021fnet}) replaces this entire block with the sequential 2D Discrete Fourier Transform (acting over both the Sequence length and Hidden dimensions):
\begin{equation}
    Z_{\text{mix}} = F_{\text{seq}} \cdot Z_{\text{encode}} \cdot F_{\text{hidden}}.
\end{equation}
Because the Fourier matrices $F_L$ and $F_D$ are constant fixed orthogonal basis transformations, the total parameter requirement per mixing layer drops uniformly to zero:
\begin{equation}
    P_{\text{PhasorMix}} = 0.
\end{equation}
This mathematically enforces $\mathcal{O}(0)$ mixing parameters, transferring the representational burden entirely to the preceding parameterized single-qubit phase shift embeddings.


\end{document}